\DeclareMathOperator*{\argmin}{arg\,min}
\newtheorem{defi}{Definition}
\newtheorem{thm}{Theorem}
\newtheorem{cor}{Corollary}
\providecommand{\SetAlgoLined}{\SetLine}
\ifcvprfinal\pagestyle{empty}\fi
\begin{document}

%%%%%%%%% TITLE
% \title{BPGrad: A Branch-and-Pruning Deep Learning Solver}
% \title{A Branch-and-Pruning Deep Learning Solver Towards Global Optimality}
\title{BPGrad: Towards Global Optimality in Deep Learning via Branch and Pruning}

\author{Ziming Zhang\thanks{Joint first authors for the paper.}\\
		Mitsubishi Electric Research Laboratories \\
		201 Broadway, Cambridge, MA 02139-1955 \\
		{\tt\small zzhang@merl.com}
		% For a paper whose authors are all at the same institution,
		% omit the following lines up until the closing ``}''.
		% Additional authors and addresses can be added with ``\and'',
		% just like the second author.
		% To save space, use either the email address or home page, not both
		\and
		Yuanwei Wu\footnotemark[1], Guanghui Wang\\
		EECS, The University of Kansas \\
		1450 Jayhawk Blvd., Lawrence, KS 66045\\
		{\tt\small \{y262w558, ghwang\}@ku.edu}		
}

\maketitle
%\thispagestyle{empty}

%%%%%%%%% ABSTRACT
\begin{abstract}
Understanding the global optimality in deep learning (DL) has been attracting more and more attention recently. Conventional DL solvers, however, have not been developed intentionally to seek for such global optimality. In this paper we propose a novel approximation algorithm, {\em BPGrad}, towards optimizing deep models globally via branch and pruning. Our BPGrad algorithm is based on the assumption of Lipschitz continuity in DL, and as a result it can adaptively determine the step size for current gradient given the history of previous updates, wherein theoretically no smaller steps can achieve the global optimality. We prove that, by repeating such branch-and-pruning procedure, we can locate the global optimality within finite iterations. Empirically an efficient solver based on BPGrad for DL is proposed as well, and it outperforms conventional DL solvers such as Adagrad, Adadelta, RMSProp, and Adam in the tasks of object recognition, detection, and segmentation.
	
\end{abstract}

%%%%%%%%% BODY TEXT
\section{Introduction}
Deep learning (DL) has been demonstrated successfully in many different research areas such as image classification~\cite{krizhevsky2012imagenet}, speech recognition \cite{hinton2012deep} and natural language processing \cite{sutskever2014sequence}. In general, its empirical success stems mainly from better network architectures \cite{he2016deep}, larger mount of training data \cite{imagenet_cvpr09}, and better learning algorithms \cite{goyal2017accurate}.

However, theoretical understanding of DL for its success in applications still remains elusive. Very recently researchers start to understand DL from the perspective of optimization such as the optimality of the learned models \cite{haeffele2017global, hand2017global, yun2017global}. It has been proved that under certain (very restrictive) conditions the critical points learned for the deep models actually achieve global optimality, even though the optimization in deep learning is highly nonconvex. These theoretical results may partially explain why such deep models work well in practice. 

Global optimality is always desirable and preferred in optimization. Locating global optimality in deep learning, however, is extremely challenging due to its high non-convexity, and thus no conventional DL solvers, \eg stochastic gradient descent (SGD) \cite{bottou2016optimization}, Adagrad~\cite{duchi2011adaptive}, Adadelta~\cite{zeiler2012adadelta}, RMSProp~\cite{Tieleman2012} and Adam~\cite{kingma2014adam}, is intentionally developed for this purpose, to our best knowledge. Alternatively different regularization techniques are applied to smooth the objective functions in DL so that the solvers can converge to some geometrically wider and flatter regions in the parameter space where good model solutions may exist \cite{zhang2015deep,chaudhari2016entropy,zhang2017Convergent}. But these solutions may not necessarily be the global optimum.

\begin{figure}[t]
	%\vspace{-10pt}
	\begin{center}
		\includegraphics[width=.61\linewidth]{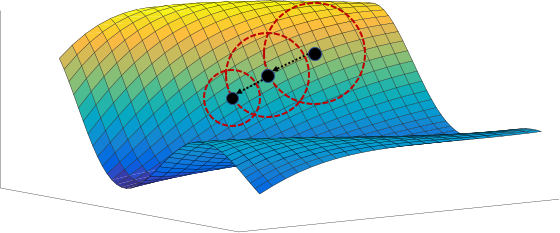}
        %\vspace{3mm}
		\caption{\footnotesize Illustration of how BPGrad works, where each black dot denotes the solution at each iterations (\ie branch), directed dotted lines denote the current gradients, and red dotted circles denote the regions wherein there should be no solutions achieving global optimality (\ie pruning). BPGrad can automatically estimate the scales of these regions based on the function evaluation of solutions and the Lipschitz continuity assumption.}
		\label{fig:BPGrad}
	\end{center}
	\vspace{-8mm}
	%  \vspace{1pt}
\end{figure}

Inspired by the techniques in global optimization of nonconvex functions, we propose a novel approximation algorithm, {\em BPGrad}, which has the ability of locating global optimality in DL via branch and pruning (BP). BP \cite{sotiropoulos2001branch} is a well-known algorithm developed for searching for global solutions for nonconvex optimization problems. Its basic idea is to effectively and gradually shrink the gap between the lower and upper bounds of global optimum by efficiently branching and pruning the parameter space. Fig.~\ref{fig:BPGrad} illustrates the optimization procedure in BPGrad.

In order to branch and prune the space we assume that the objective functions in DL are Lipschitz continuous \cite{erikssonapplied}, or can be approximated by Lipschitz functions. % This assumption implies that the norm of gradients in training deep models are upper bounded, which empirically seems true always. 
This is motivated by the facts that (1) Lipschitz continuity provides a natural way to estimate the lower and upper bounds of the global optimum (see Sec. \ref{sssec:lub}) used in BP, and (2) it can also serve as regularization, if needed, to smoothen the objective functions so that the returned solutions can generalize well. 

\begin{wrapfigure}{r}{.43\linewidth}
	%\vspace{-20pt}
	\begin{center}
		\includegraphics[width=\linewidth]{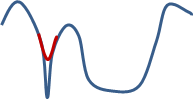}
		\vspace{-2mm}
		\caption{\footnotesize Illustration of Lipschitz continuity as regularization (red) to smoothen a function (blue).}
		\label{fig:Lipschitz}
	\end{center}
	\vspace{-12pt}
	%  \vspace{1pt}
\end{wrapfigure}
In Fig. \ref{fig:Lipschitz} we illustrate the functionality of Lipschitz continuity as regularization, where the noisy narrower but deeper valley is smoothed out, while the wider but shallower valley is preserved. %where a new global minimum for the smoothed function exists. 
Such regularization behavior can prevent algorithms from being stuck in bad local minima. Also this is advocated and demonstrated to be crucial in order to achieve good generalization of learned DL models in several recent works such as~\cite{chaudhari2016entropy}. {\em In this sense, our BPGrad algorithm/solver essentially aims to locate global optimality in the smoothed objective functions for DL.}

Further BPGrad can generate solutions along the directions of gradients (\ie branch) based on the estimated regions wherein no global optimum should exist theoretically (\ie pruning), and by repeating such branch-and-pruning procedure BPGrad can locate global optimum. Empirically the high demand of computation as well as footprint in memory for running BPGrad inspires us to develop an efficient DL solver to approximate BPGrad towards global optimization.

\noindent
{\bf Contributions:}
The main contributions of our work are:
\begin{enumerate}[noitemsep]
\item[{\em C1.}] We propose a novel approximation algorithm, BPGrad, which is intent on locating global optimum in DL. To our best knowledge, our approach is the {\em first} algorithmic attempt towards global optimization in DL. 
\item[{\em C2.}] Theoretically we prove that BPGrad can converge to global optimality within finite iterations.
\item[{\em C3.}] Empirically we propose a novel and efficient DL solver based on BPGrad to reduce the requirement of computation as well as footprint in memory. We provide both theoretical and empirical justification for our solver towards preserving the theoretical properties of BPGrad. We demonstrate that our solver outperforms conventional DL solvers in the applications of object recognition, detection, and segmentation.
\end{enumerate}

\subsection{Related Work}
\noindent
\textbf{Global Optimality in DL:} The empirical loss minimization problem in learning deep models is highly dimensional and nonconvex with potentially numerous local minima and saddle points. Blum and Rivest~\cite{blum1989training} showed that it is difficult to find the global optima because  in the worst case even learning a simple 3-node neural network is NP-complete. 

In spite of the difficulties in optimizing deep models, researchers have attempted to provide empirical as well as theoretical justification for the success of these models \wrt global optimality in learning. Zhang \etal~\cite{zhang2016understanding} empirically demonstrated that sufficiently over-parametrized networks trained with stochastic gradient descent can reach global optimality. Choromanska \etal~\cite{choromanska2015loss} studied the loss surface of multilayer networks using spin-glass model and showed that for many large-size decoupled networks, there exists a band with many local optima, whose objective values are small and close to that of a global optimum. Brutzkus and Globerson~\cite{brutzkus2017globally} showed that gradient descent converges to the global optimum in polynomial time on a shallow neural network with one hidden layer and a convolutional structure and a ReLU activation function. Kawaguchi~\cite{kawaguchi2016deep} proved that the error landscape does not have bad local minima in the optimization of linear deep neural networks. Yun \etal~\cite{yun2017global} extended these results and proposed sufficient and necessary conditions for a critical point to be a global minimum. Haeffele and Vidal~\cite{haeffele2017global} suggested that it is critical to balance the degrees of positive homogeneity between the network mapping and the regularization function to prevent non-optimal local minima in the loss surface of neural networks. Nguyen and Hein~\cite{nguyen2017loss} argued that almost all local minima are global optimal in fully connected wide neural networks, whose number of hidden neurons of one layer is larger than that of training points. Soudry and Carmon~\cite{soudry2016no} employed smoothed analysis techniques to provide theoretical guarantee that the highly nonconvex loss functions in multilayer networks can be easily optimized using local gradient descent updates. Hand and Voroninski \cite{hand2017global} provided theoretical properties for the problem of enforcing priors provided by generative deep neural networks via empirical risk minimization by establishing the favorable global geometry.

\noindent
\textbf{DL Solvers:} SGD \cite{bottou2016optimization} is the most widely used DL solver due to its simplicity, whose learning rate (\ie, step size for gradient) is predefined. In general, SGD suffers from slow convergence, and thus its learning rate needs to be carefully tuned. To improve the efficiency of SGD, several DL solvers with adaptive learning rates have been proposed, including Adagrad~\cite{duchi2011adaptive}, Adadelta~\cite{zeiler2012adadelta}, RMSProp~\cite{Tieleman2012} and Adam~\cite{kingma2014adam}. These solvers integrate the advantages from both stochastic and batch methods where small mini-batches are used to estimate diagonal second-order information heuristically. These solvers have the capability of escaping saddle points and often yield faster convergence empirically. 

Specifically, Adagrad is well suited for dealing with sparse data, as it adapts the learning rate to the parameters, performing smaller updates on frequent parameters and larger updates on infrequent parameters. However, it suffers from shrinking on the learning rate, which motivates Adadelta, RMSProp and Adam. Adadelta accumulates squared gradients to be fixed values rather than over time in Adagrad, RMSProp updates the parameters based on the rescaled gradients, and Adam does so based on the estimated mean and variance of the gradients. Very recently, Mukkamala \etal ~\cite{mukkamala2017variants} proposed variants of RMSProp and Adagrad with logarithmic regret bounds. 

\noindent
{\bf Convention \vs Ours:} Though the properties of global optimality in DL are very attractive, as far as we know, however, there is no solver developed intentionally to capture such global optimality so far. To fill this void, we propose our BPGrad algorithm towards global optimization in DL.

From the optimization perspective, our algorithm shares similarities with the recent work \cite{MalherbeICML17} on global optimization of general Lipschitz functions (not specifically for DL). In~\cite{MalherbeICML17} a uniform sampler is utilized to maximize the lower bound of the maximizer (equivalently minimizing the upper bound of the minimizer) subject to Lipschitz conditions. Convergence properties {\em w.h.p.} are derived. In contrast, our approach considers estimating both lower and upper bounds of the global optimum, and employs the gradients as guidance to more effectively sample the parameter space for pruning. Convergence is proved to show that our algorithm will terminate within finite iterations.

From the empirical solver perspective, our solver shares similarities with the recent work \cite{koushik2016improving} on improving SGD using the feedback from the objective function. Specifically~\cite{koushik2016improving} tracks the relative changes in the objective function with a running average, and uses it to adaptively tune the learning rate in SGD. No theoretical analysis, however, is provided for justification. In contrast, our solver does use the feedback from the object function to determine the learning rate adaptively but based on the rescaled distance between the feedback and the current lower bound estimation. Theoretical as well as empirical justifications are established.

\section{BPGrad Algorithm for Deep Learning}\label{ssec:BnB_LF}
\subsection{Key Notation} 
%In this section, we summarize the notations that are widely used in the following sections. 

We denote $\mathbf{x}\in\mathcal{X}\subseteq\mathbb{R}^d$ as the parameters in the neural network, $(\omega, y)\in \Omega \times \mathcal{Y}$ as a pair of a data sample $\omega$ and its associated label $y$, $\phi: \Omega\times \mathcal{X} \rightarrow \mathcal{Y}$ as the nonconvex prediction function represented by the network, $f$ as the objective function for training the network with Lipschitz constant $L\geq0$, $\nabla f$ as the gradient of $f$ over parameters $\mathbf{x}$\footnote{We assume $\nabla f\neq \mathbf{0}$ {\em w.l.o.g.} Empirically we can randomly sample a non-zero direction for update wherever $\nabla f=\mathbf{0}$.}, $\nabla \tilde{f}=\frac{\nabla f}{\|\nabla f\|_2}$ denotes the {\em normalized} gradient (\ie direction of the gradient), $f^*$ as the global minimum, and $\|\cdot\|_2$ as the $\ell_2$-norm operator over vectors.
\begin{defi}[Lipschitz Continuity \cite{erikssonapplied}]\label{def:Lipschitz}
A function $f$ is {\em Lipschitz continuous} with Lipschitz constant $L$ on $\mathcal{X}$, if there is a (necessarily nonnegative) constant $L$ such that
\begin{align}\label{eqn:LF}
|f(\mathbf{x}_1)-f(\mathbf{x}_2)|\leq L\|\mathbf{x}_1-\mathbf{x}_2\|_2, \forall \mathbf{x}_1 , \mathbf{x}_2 \in \mathcal{X}.
\end{align}
\end{defi}

\subsection{Problem Setup}
We would like to learn the parameters for a given network by minimizing the following objective function $f$:
\begin{align}\label{eqn:f}
\min_{\mathbf{x}\in\mathcal{X}} f(\mathbf{x}) \equiv \mathbb{E}_{(\omega\times y)\in\Omega\times\mathcal{Y}}\Big[\mathcal{L}(y, \phi(\omega, \mathbf{x}))\Big] + \mathcal{R}(\mathbf{x}),
\end{align}
where $\mathbb{E}$ denotes the expectation over data pairs, $\mathcal{L}$ denotes a loss function (\eg, hinge loss) for measuring the difference between the ground-truth labels and the predicted labels given data samples, and $\mathcal{R}$ denotes a regularizer over parameters. %Note that in this paper we only consider the feasible solution for $\mathbf{x}$ as being unconstrained. For constrained optimization over $\mathbf{x}$, potentially we can employ projection methods (\eg \cite{Hager2016}) to solve it, but this will be out of scope.
Particularly we assume that: 
\begin{enumerate}[noitemsep]
\item[{\em F1.}] $f$ is lower bounded by 0 and upper bounded as well, \ie $0\leq f(\mathbf{x})<+\infty, \forall \mathbf{x}\in\mathcal{X}$;
\item[{\em F2.}] $f$ is differentiable everywhere in the bounded space  $\mathcal{X}$;
\item[{\em F3.}] $f$ is Lipschitz continuous, or can be approximated by Lipschitz functions, with constant $L\geq 0$.
\end{enumerate}

%which provides us a way of estimating the lower and upper bounds of the global minimum.

\subsection{Algorithm}

% \begin{figure}[t]
% 	%\vspace{-10pt}
% 	\begin{center}
% 		\includegraphics[width=\linewidth]{RPS.png}
%         \vspace{3mm}
% 		\caption{\footnotesize Illustration of the branching (\ie space splitting) procedure in Alg. \ref{alg:BnB}. Please refer to Def. \ref{def:RPS} for notation.}
% 		\label{fig:rss}
% 	\end{center}
% %	\vspace{-10pt}
% 	%  \vspace{1pt}
% \end{figure}

\subsubsection{Lower \& Upper Bound Estimation}\label{sssec:lub}
Consider the situation where samples $\mathbf{x}_1, \cdots, \mathbf{x}_t\in\mathcal{X}$ exist for evaluation by function $f$ with Lipschitz constant $L$, whose global minimum $f^*$ is reached by the sample $\mathbf{x}^*$. %Based on Eq. \ref{eqn:LF}, we can obtain $f(\mathbf{x}_i)-L\|\mathbf{x}_i - \mathbf{x}^*\|_2\leq f^* \leq f(\mathbf{x}_i), \forall i=1,\cdots,t$, leading to 
Then based on Eq. \ref{eqn:LF} and simple algebra, we can obtain
\begin{align}\label{eqn:LUB_f}
\max_{i=1,\cdots,t}\Big\{f(\mathbf{x}_i)-L\|\mathbf{x}_i-\mathbf{x}^*\|_2\Big\}\leq f^* \leq \min_{i=1,\cdots,t}f(\mathbf{x}_i).
\end{align}
This provides us a tractable upper bound and an {\em intractable} lower bound, unfortunately, of the global minimum. The intractability comes from the fact that $\mathbf{x}^*$ is unknown, and thus makes the lower bound in Eq. \ref{eqn:LUB_f} unusable empirically.

To address this problem, we propose a novel tractable estimator, $\rho\min_{i=1,\cdots,t}f(\mathbf{x}_i)$ $(0\leq \rho <1)$. This estimator intentionally introduces a gap from the upper bound, which will be shrunk by either decreasing the upper bound or increasing $\rho$. As proved in Thm. \ref{thm:f} (see Sec. \ref{ssec:analysis}), when the parameter space $\mathcal{X}$ is fully covered by the samples $\{\mathbf{x}_i\}$, this estimator will become the lower bound of $f^*$.

In summary, we define our lower and upper bound estimators for the global minimum as $\rho\min_{i=1,\cdots,t}f(\mathbf{x}_i)$ and $\min_{i=1,\cdots,t}f(\mathbf{x}_i)$, respectively.

\subsubsection{Branch \& Pruning}

Based on our estimators, we propose a novel approximation algorithm, BPGrad, towards global optimization in DL via branch and pruning. We show it in Alg.~\ref{alg:BnB} where the predefined constant $\epsilon\geq0$ controls the precision of the solution. % The satisfaction of condition $\min_{i=1,\cdots,t}f(\mathbf{x}_i) \leq \frac{\epsilon}{1-\rho}$ implies that $\min_{i=1,\cdots,t}f(\mathbf{x}_i) - f^* \leq \epsilon$ holds as well. 
% The basic idea here is to gradually split the parameter space $\mathcal{X}$ into two regions, namely, the Removable Solution Space (RPS) $\mathcal{X}_R$ (see Def.~\ref{def:RPS} later) and the remaining space $\mathcal{X}\setminus\mathcal{X}_R$ as illustrated in Fig. \ref{fig:rss}, and then bound $f^*$ over the two regions.

%Minimizing $f$ over $\mathcal{X}\setminus\mathcal{X}_R$, however, is as difficult as optimizing it over $\mathcal{X}$, in general. Instead we propose  so that each time a ball centered at the sample can be eliminated from the space with guaranteed lower and upper bounds. Meanwhile $\mathcal{X}_R$ will grow iteratively and steadily by absorbing the balls.

\begin{algorithm}[t]
\SetAlgoLined
\SetKwInOut{Input}{Input}\SetKwInOut{Output}{Output}
\Input{objective function $f$ with Lipschitz constant $L\geq0$, precision $\epsilon\geq 0$}
\Output{minimizer $\mathbf{x}^*$}
\BlankLine
Randomly initialize $\mathbf{x}_1$, $t\leftarrow 1$, $\rho\leftarrow 0$;

% \tcc{Outer Loop: Bounding}
\While{$\min_{i=1,\cdots,t}f(\mathbf{x}_i)\leq \frac{\epsilon}{1-\rho}$}{
% \tcc{Inner Loop: Branching}
\While{$\exists \mathbf{x}_{t+1}\in\mathcal{X}$ satisfies Eq. \ref{eqn:sampling_rule}}{
Compute $\mathbf{x}_{t+1}$ by solving Eq. \ref{eqn:GD-sampler};

$t\leftarrow t+1$;
}
Increase $\rho$ such that $0\leq\rho<1$ still holds;
}
\Return $\mathbf{x}^*=\mathbf{x}_{i^*}$ where $i^*\in\argmin_{i=1,\cdots,t} f(\mathbf{x}_i)$;
\caption{BPGrad Algorithm for Deep Learning}\label{alg:BnB}
\end{algorithm}

\noindent
{\bf Branch:} The {\em inner} loop in Alg. \ref{alg:BnB} conducts the branch operation to split the parameter space recursively by {\em sampling}. Towards this goal, we need a mapping between the parameter space and the bounds. Considering the lower bound in Eq.~\ref{eqn:LUB_f}, we propose sampling $\mathbf{x}_{t+1}\in\mathcal{X}$ based on the previous samples $\mathbf{x}_1,\cdots,\mathbf{x}_t\in\mathcal{X}$ so that it satisfies
\begin{align}\label{eqn:sampling_rule}
\hspace{-2mm}\max_{i=1,\cdots,t}\Big\{f(\mathbf{x}_i)-L\|\mathbf{x}_i-\mathbf{x}_{t+1}\|_2\Big\}\leq\rho\min_{i=1,\cdots,t}f(\mathbf{x}_i).
\end{align}
Note that an equivalent constraint has been used in \cite{MalherbeICML17}. 

To improve sampling efficiency for decreasing the objective, we propose sampling along the directions of (stochastic) gradients with small distortion. Though gradients only encode local structures of (nonconvex) functions in a high dimensional space, they are good indicators for locating local minima \cite{pmlr-v49-lee16, panageas2016gradient}. Specifically, we propose a minimization problem for generating samples:
\begin{align}\label{eqn:GD-sampler}
& \min_{\mathbf{x}_{t+1}\in\mathcal{X}, \eta_t\geq0}\left\|\mathbf{x}_{t+1} - \left(\mathbf{x}_t - \eta_t\nabla\tilde{f}(\mathbf{x}_t)\right)\right\|_2^2 + \gamma\eta_t^2, \\
& \mbox{s.t.} \; \max_{i=1,\cdots,t}\Big\{f(\mathbf{x}_i)-L\|\mathbf{x}_i-\mathbf{x}_{t+1}\|_2\Big\}\leq\rho\min_{i=1,\cdots,t}f(\mathbf{x}_i), \nonumber
\end{align}
where $\gamma\geq0$ is a predefine constant controlling the trade-off between the distortion and the step size $\eta_t\geq0$. That is, under the condition in Eq. \ref{eqn:sampling_rule}, the objective in Eq. \ref{eqn:GD-sampler} aims to generate a sample that has small distortion from an anchor point, whose step size is small as well due to the locality property of gradients, along the direction of the gradient. 

Note that other reasonable objective functions may also be utilized here for sampling purpose as long as the condition in Eq. \ref{eqn:sampling_rule} is satisfied. More efficient sampling objectives will be investigated in our future work.

\noindent
{\bf Pruning:} In fact Eq. \ref{eqn:sampling_rule} specifies that new samples should be generated outside the union of a set of balls defined by previous samples. To precisely describe this requirement, we introduce a new concept of removable solution space in our work as follows:
\begin{defi}[Removable Parameter Space (RPS)]\label{def:RPS}
We define the RPS, denoted as $\mathcal{X}_{R}$, as
\begin{align}\label{eqn:ball}
\mathcal{X}_R(t) \stackrel{\mbox{\em def}}{=} \cup_{j=1,\cdots,t} \mathcal{B}\left(\mathbf{x}_j, r_j\right),
\end{align}
where $\mathcal{B}(\mathbf{x}_j, r_j)=\{\mathbf{x}\mid \|\mathbf{x} - \mathbf{x}_j\|_2 < r_j, \mathbf{x}\in\mathcal{X}\}, \forall j$ defines a ball centered at sample $\mathbf{x}_j\in\mathcal{X}$ with radius $r_j=\frac{1}{L}\left[f(\mathbf{x}_j)-\rho\min_{i=1,\cdots,t}f(\mathbf{x}_i)\right], \forall j$.
\end{defi}
RPS specifies a region wherein the function evaluations of all the points cannot be smaller than the lower bound estimator conditioning on the Lipschitz continuity assumption. Therefore, when the lower bound estimator is higher than the global minimum $f^*$, we can safely remove all the points in RPS without evaluation. However, when it becomes smaller than $f^*$, we risk missing the global solutions. 

To address this issue, we propose the {\em outer} loop in Alg. \ref{alg:BnB} to increase the lower bound for drawing more samples which may further decrease the upper bound later.

\subsection{Theoretical Analysis}\label{ssec:analysis}
% In this section, we present our analysis for Alg. \ref{alg:BnB}.
\begin{thm}[Lower \& Upper Bounds]\label{thm:f}
Whenever $\mathcal{X}_R(t)\equiv\mathcal{X}$ holds, the samples generated by Alg. \ref{alg:BnB} satisfies
\begin{align}\label{eqn:rho}
\rho\min_{i=1,\cdots,t}f(\mathbf{x}_i) \leq f^* \leq \min_{i=1,\cdots,t}f(\mathbf{x}_i).
\end{align}
\end{thm}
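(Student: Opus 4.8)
The upper bound $f^* \leq \min_{i} f(\mathbf{x}_i)$ is immediate and holds unconditionally: $\mathbf{x}^*$ achieves the global minimum over all of $\mathcal{X}$, and each sample $\mathbf{x}_i$ lies in $\mathcal{X}$, so $f^* = f(\mathbf{x}^*) \leq f(\mathbf{x}_i)$ for every $i$, hence $f^* \leq \min_i f(\mathbf{x}_i)$. The real content is the lower bound, and the plan is to exploit the hypothesis $\mathcal{X}_R(t) \equiv \mathcal{X}$ together with the definition of the radii $r_j$.

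\textbf{Main argument for the lower bound.} Since $\mathbf{x}^* \in \mathcal{X} = \mathcal{X}_R(t) = \cup_{j=1}^t \mathcal{B}(\mathbf{x}_j, r_j)$, there exists an index $k$ with $\mathbf{x}^* \in \mathcal{B}(\mathbf{x}_k, r_k)$, i.e. $\|\mathbf{x}^* - \mathbf{x}_k\|_2 < r_k = \frac{1}{L}\bigl[f(\mathbf{x}_k) - \rho\min_{i} f(\mathbf{x}_i)\bigr]$. Now apply Lipschitz continuity (Def.~\ref{def:Lipschitz}) to the pair $\mathbf{x}^*, \mathbf{x}_k$: we get $f(\mathbf{x}_k) - f(\mathbf{x}^*) \leq |f(\mathbf{x}_k) - f(\mathbf{x}^*)| \leq L\|\mathbf{x}_k - \mathbf{x}^*\|_2 < L r_k = f(\mathbf{x}_k) - \rho\min_{i} f(\mathbf{x}_i)$. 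Cancelling $f(\mathbf{x}_k)$ from both sides and rearranging yields $f^* = f(\mathbf{x}^*) > \rho\min_{i} f(\mathbf{x}_i)$, which is the desired inequality (indeed slightly stronger, with strict inequality). Combining the two halves gives Eq.~\ref{eqn:rho}.

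\textbf{Where to be careful.} The subtle point is the handling of the strictness in the ball definition ($\|\mathbf{x}-\mathbf{x}_j\|_2 < r_j$) versus the non-strict Lipschitz bound; if one prefers the non-strict conclusion stated in the theorem, a limiting/closure argument or simply weakening $<$ to $\leq$ at the last step suffices, so this is not a genuine obstacle. A second point worth a sentence: one should note $L > 0$ here so that dividing by $L$ in the definition of $r_j$ is legitimate — if $L = 0$ the function is constant and the claim is trivial (or one treats it separately). Also, implicit in writing $\min_i f(\mathbf{x}_i)$ inside $r_j$ is that this min is taken over the same index set $i = 1,\dots,t$ as appears in the theorem statement, so the quantity $\rho\min_i f(\mathbf{x}_i)$ is literally the same lower-bound estimator throughout; I would make that consistency explicit. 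Overall the proof is short; the only ``work'' is recognizing that the covering hypothesis forces $\mathbf{x}^*$ into some specific ball whose radius was precisely engineered so that the Lipschitz inequality collapses to the bound we want.
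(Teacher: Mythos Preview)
Your argument is correct and uses the same ingredients as the paper's proof: the covering hypothesis, the definition of the ball radii $r_j$, and Lipschitz continuity. The only difference is packaging---the paper argues by contradiction (if $\rho\min_i f(\mathbf{x}_i)>f^*$ then $\mathbf{x}^*$ would still be available for sampling, contradicting $\mathcal{X}_R(t)\equiv\mathcal{X}$), while you give the direct contrapositive and make the Lipschitz step explicit where the paper leaves it implicit.
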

\begin{proof}
Since $f^*$ is the global minimum, it always holds that $f^* \leq \min_{i=1,\cdots,t}f(\mathbf{x}_i)$. Now suppose that if $\mathcal{X}_R(t)\equiv\mathcal{X}$ holds, $\rho\min_{i=1,\cdots,T}f(\mathbf{x}_i) > f^*$ holds as well. Then there would exist at least one point (\ie global minimum) left for sampling, contradicting the condition of $\mathcal{X}_R(t)\equiv\mathcal{X}$. We then complete the proof.
\end{proof}

\begin{cor}[Approximation Error Bound]
Whenever both $\min_{i=1,\cdots,t}f(\mathbf{x}_i)\leq \frac{\epsilon}{1-\rho}$ and $\mathcal{X}_R(t)\equiv\mathcal{X}$ hold, it is satisfied that
\begin{align}
\min_{i=1,\cdots,t}f(\mathbf{x}_i) - f^* \leq \epsilon.
\end{align}
\end{cor}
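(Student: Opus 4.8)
The plan is to derive the bound by chaining the lower-bound half of Theorem~\ref{thm:f} with the stopping criterion of the outer loop in Alg.~\ref{alg:BnB}, via one line of elementary algebra. First I would invoke Theorem~\ref{thm:f}: since the hypothesis assumes $\mathcal{X}_R(t)\equiv\mathcal{X}$, that theorem gives $\rho\min_{i=1,\cdots,t}f(\mathbf{x}_i)\leq f^*$, i.e. the tractable estimator is a genuine lower bound on the global minimum. Writing $m=\min_{i=1,\cdots,t}f(\mathbf{x}_i)$ for brevity, this says $\rho m\leq f^*$, hence $m-f^*\leq m-\rho m=(1-\rho)m$.

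Next I would use the second hypothesis, $m\leq \frac{\epsilon}{1-\rho}$, together with the fact that $0\leq\rho<1$ so that $1-\rho>0$ and multiplying by it preserves the inequality: $(1-\rho)m\leq(1-\rho)\cdot\frac{\epsilon}{1-\rho}=\epsilon$. Combining the two displayed inequalities yields $m-f^*\leq\epsilon$, which is exactly the claim. One should also note $m-f^*\geq 0$ trivially since $f^*$ is the global minimum, so the conclusion is a genuine two-sided control, though only the upper bound is asserted.

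There is essentially no obstacle here — the statement is an immediate corollary, and the only point requiring a moment's care is that the outer-loop guard $m\le\frac{\epsilon}{1-\rho}$ is being read with the \emph{current} value of $\rho$ (the one for which $\mathcal{X}_R(t)\equiv\mathcal{X}$ was attained), and that this $\rho$ satisfies $0\le\rho<1$ by construction in Alg.~\ref{alg:BnB}, so that division and multiplication by $1-\rho$ are legitimate. If one wanted to be thorough one could also remark why both hypotheses can simultaneously hold at termination, but that is really the content of the convergence analysis rather than of this corollary, so I would keep the proof to the three short steps above.
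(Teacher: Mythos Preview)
Your proposal is correct and is exactly the intended argument: the paper states the corollary without proof, leaving it as an immediate consequence of Theorem~\ref{thm:f} together with the stopping condition $\min_i f(\mathbf{x}_i)\le\epsilon/(1-\rho)$, which is precisely the chain $m-f^*\le(1-\rho)m\le\epsilon$ you spell out. There is nothing to add.
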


% Based on Thm. \ref{thm:f}, in Alg. \ref{alg:BnB} we would like to control the precision by sampling the parameter space so that $0\leq\min_{i=1,\cdots,T}f(\mathbf{x}_i)-f^*\leq(1-\rho)\min_{i=1,\cdots,T}f(\mathbf{x}_i)\leq\epsilon$. Note that Alg. \ref{alg:BnB} provides a simple realization of $\rho$ that is capable of updating itself over iterations. Any reasonable implementation can be used here.

% \begin{prop}
% The sample $\mathbf{x}_{t+1}\in\mathcal{X}_{t+1}$ based on Eq. \ref{eqn:GD-sampler} satisfies Eq. \ref{eqn:if-condition} as well.
% \end{prop}

\begin{thm}[Convergence within Finite Samples]\label{thm:number}
The total number of samples, $T$, in Alg. \ref{alg:BnB} is upper bounded by:
\begin{align}
T \leq \left[\frac{2L}{(1-\rho)f_{\min}}\right]^d \cdot \frac{V_{\mathcal{X}}}{C},
\end{align}
where $V_{\mathcal{X}}$ denotes the volume of the space $\mathcal{X}$, $C=\frac{\pi^{\frac{d}{2}}}{\Gamma\left(\frac{d}{2}+1\right)}$ denotes a constant, and $f_{\min} = \min_{i=1,\cdots,T}f(\mathbf{x}_i)$ denotes the minimum evaluation.
\end{thm}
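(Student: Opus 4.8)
The plan is to convert the statement into a ball-packing estimate in $\mathbb{R}^d$. The sampling rule Eq.~\ref{eqn:sampling_rule} --- equivalently, the requirement in Def.~\ref{def:RPS} that each new point lie outside the current removable space $\mathcal{X}_R$ --- forces the samples $\mathbf{x}_1,\dots,\mathbf{x}_T$ generated by Alg.~\ref{alg:BnB} to be mutually separated, and then comparing the total volume of disjoint balls centered at these samples against the volume $V_{\mathcal{X}}$ of the ambient region bounds $T$.

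First I would quantify the separation. When $\mathbf{x}_k$ with $k\ge 2$ is produced by solving Eq.~\ref{eqn:GD-sampler}, it satisfies the constraint Eq.~\ref{eqn:sampling_rule} with $t=k-1$; since its left-hand side is a maximum over $i\le k-1$, each term must obey the inequality, so $f(\mathbf{x}_j)-L\|\mathbf{x}_j-\mathbf{x}_k\|_2 \le \rho_k\min_{i\le k-1}f(\mathbf{x}_i)$ for every $j\le k-1$, where $\rho_k$ denotes the value of $\rho$ at that iteration. Rearranging and using $f(\mathbf{x}_j)\ge\min_{i\le k-1}f(\mathbf{x}_i)$ (valid because $j\le k-1$) gives $\|\mathbf{x}_j-\mathbf{x}_k\|_2 \ge \tfrac{1}{L}(1-\rho_k)\min_{i\le k-1}f(\mathbf{x}_i)$. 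Two monotonicities of Alg.~\ref{alg:BnB} make this bound uniform: $\rho$ is only ever increased, so $\rho_k\le\rho$ for the terminal value $\rho$; and the running minimum is nonincreasing, so $\min_{i\le k-1}f(\mathbf{x}_i)\ge f_{\min}$. Hence $\|\mathbf{x}_j-\mathbf{x}_k\|_2\ge \bar r:=\tfrac{(1-\rho)f_{\min}}{L}$ for all $j<k$, and by symmetry for every pair of distinct samples.

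Next I would run the volume count. By the separation just obtained, the open balls $\mathcal{B}(\mathbf{x}_j,\bar r/2)$, $j=1,\dots,T$, are pairwise disjoint; each has volume $C(\bar r/2)^d$ with $C=\pi^{d/2}/\Gamma(d/2+1)$ the volume of the unit $d$-ball, and their union lies inside $\mathcal{X}$ up to a $\bar r/2$-collar at $\partial\mathcal{X}$, which I absorb into $V_{\mathcal{X}}$. Therefore $T\cdot C(\bar r/2)^d\le V_{\mathcal{X}}$, and solving for $T$ gives $T\le \tfrac{V_{\mathcal{X}}}{C}\left(\tfrac{2}{\bar r}\right)^d=\left[\tfrac{2L}{(1-\rho)f_{\min}}\right]^d\tfrac{V_{\mathcal{X}}}{C}$, which is the asserted bound; in particular each inner loop terminates, and so does the algorithm once the outer loop stops at some $\rho<1$ with $f_{\min}>0$.

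The main obstacle is bookkeeping rather than any deep idea: both the ball radii $r_j$ and the pruning threshold $\rho\min_i f(\mathbf{x}_i)$ drift during the run (the running minimum only shrinks, $\rho$ only grows), so the separation valid at the instant $\mathbf{x}_k$ is drawn must be turned into a single uniform lower bound $\bar r$ that survives the whole execution --- exactly where the two monotonicities are used. The only other loose end is the boundary collar in the packing step; it is harmless because $\mathcal{X}$ is bounded (assumption F2) and contributes merely a lower-order term, and one should note the bound is vacuous unless $f_{\min}>0$, consistent with $f$ being nonnegative (F1) and $\rho\in[0,1)$ being maintained.
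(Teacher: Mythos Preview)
Your proof is correct and follows essentially the same route as the paper: derive the pairwise separation $\|\mathbf{x}_j-\mathbf{x}_k\|_2\ge (1-\rho)f_{\min}/L$ from the sampling constraint Eq.~\ref{eqn:sampling_rule}, then pack disjoint balls of half that radius and compare total volume to $V_{\mathcal{X}}$. You are in fact more careful than the paper in tracking the drift of $\rho$ across outer-loop iterations and in flagging the boundary-collar issue, both of which the paper's proof silently ignores.
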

\begin{proof} 
Given $\forall j, \forall t$ such that $1\leq j\leq t\leq T-1$, we have
\begin{align}
\hspace{-5mm}\|\mathbf{x}_{t+1}-\mathbf{x}_j\|_2 & \geq \frac{1}{L}\left[f(\mathbf{x}_j) - \rho\min_{i=1,\cdots,t}f(\mathbf{x}_i)\right]  \\
\hspace{-5mm} & \geq \frac{1-\rho}{L}\cdot\min_{i=1,\cdots,t}f(\mathbf{x}_i) \geq \frac{(1-\rho)f_{\min}}{L}. \nonumber
\end{align}
This allows us to generate two balls $\mathcal{B}\left(\mathbf{x}_{t+1}, \frac{(1-\rho)f_{\min}}{2L}\right)$ and $\mathcal{B}\left(\mathbf{x}_j, \frac{(1-\rho)f_{\min}}{2L}\right)$ so that they have no overlap with each other. As a result we can generate $T$ balls with radius of $\frac{(1-\rho)f_{\min}}{2L}$ and no overlaps, and their accumulated volume should be no bigger than $V_{\mathcal{X}}$. That is,
\begin{align}
V_{\mathcal{X}} \geq \sum_{t=1}^T V_{\mathcal{B}\left(\mathbf{x}_t, \frac{(1-\rho)f_{\min}}{2L}\right)} = C\left[\frac{(1-\rho)f_{\min}}{2L}\right]^d T.
\end{align}
Further using simple algebra we can complete the proof.
\end{proof}

\section{Approximate DL Solver based on BPGrad}

\begin{algorithm}[t]
\SetAlgoLined
\SetKwInOut{Input}{Input}\SetKwInOut{Output}{Output}
\Input{number of evaluations $n$ repeating $N$ times at most, objective function $f$ with Lipschitz constant $L\geq0$, momentum $0\leq\mu\leq 1$}
\Output{minimizer $\mathbf{x}^*$}
\BlankLine
$t\leftarrow 1, \mathbf{v}_1\leftarrow\mathbf{0}$, and randomly initialize $\mathbf{x}_1$;

\For{$m\leftarrow 1$ \KwTo $N$}{
$\rho\leftarrow 1 - \frac{1}{m}$;

\While{$t < mn$}{
%$\mathbf{x}_{t+1}\leftarrow\mathbf{x}_{t} - \frac{f(\mathbf{x}_t) - \rho\min_{i=1,\cdots,t}f(\mathbf{x}_i)}{L\|\nabla f(\mathbf{x}_t)\|_2}\cdot\nabla f(\mathbf{x}_t)$;
% Compute $\eta_t$ using Eq. \ref{eqn:eta_t} for fast sampling;

$\mathbf{v}_{t+1}\leftarrow\mu\mathbf{v}_t - \frac{f(\mathbf{x}_t) - \rho\min_{i=1,\cdots,t}f(\mathbf{x}_i)}{L}\cdot\frac{\nabla f(\mathbf{x}_t)}{\|\nabla f(\mathbf{x}_t)\|_2}$;

$\mathbf{x}_{t+1}\leftarrow\mathbf{x}_{t} + \mathbf{v}_{t+1}$;

% Update $\rho$ such that $0\leq\rho<1$ holds;

$t\leftarrow t+1$;
}
\lIf{$\min_{i=1,\cdots,t}f(\mathbf{x}_i)\leq \frac{\epsilon}{1-\rho}$ holds}{
Break
}
}
\Return $\mathbf{x}^*=\mathbf{x}_{i^*}$ where $i^*\in\argmin_{i=1,\cdots,n} f(\mathbf{x}_i)$;
\caption{BPGrad based Solver for Deep Learning}\label{alg:BnB-GD}
\end{algorithm}

Though the BPGrad algorithm has nice theoretical properties for global optimization, directly applying Alg. \ref{alg:BnB} to deep learning will incur the following problems that limit its empirical usage:
\begin{enumerate}[noitemsep]
\item[{\em P1.}] From Thm. \ref{thm:number} we can see that due to the high dimensionality of the parameter space in DL it is impractical to draw sufficient samples to cover the entire space. %Therefore it is highly desirable to sample the space efficiently.
\item[{\em P2.}] Solving Eq. \ref{eqn:GD-sampler} involves the knowledge of previous samples, which incurs significant amount of both computational and storage burden for deep learning. %This suggests that approximation algorithms may be needed.
\item[{\em P3.}] Computing $f(\mathbf{x}_t)$ and $\nabla\tilde{f}(\mathbf{x}_t), \forall \mathbf{x}_t\in\mathcal{X}$ is time-consuming, especially for large-scale data. 
\end{enumerate}

To address problem {\em P1}, in practice we manually set the maximum iterations for both inner and outer loops in Alg. \ref{alg:BnB}.

To address problem {\em P2}, we further make some extra assumptions to simplify the branching/sampling procedure based on Eq. \ref{eqn:GD-sampler} as follows:

\begin{enumerate}[noitemsep]
\item[{\em A1.}] Minimizing distortion is much important than minimizing step sizes, \ie $\gamma\ll1$;  
\item[{\em A2.}] $\mathcal{X}$ is sufficiently large where $\exists\eta_t\geq0$ so that $\mathbf{x}_{t+1}=\mathbf{x}_t-\eta_t\nabla\tilde{f}(\mathbf{x}_t)\in\mathcal{X}\setminus\mathcal{X}_R(t)$ always holds;
\item[{\em A3.}] $\eta_t\geq0$ is always sufficiently small for local update.
\item[{\em A4.}] $\mathbf{x}_{t+1}$ can be sampled only based on $\mathbf{x}_t$ and $\nabla\tilde{f}(\mathbf{x}_t)$.
\end{enumerate}
% Based on these assumptions we can simplify Eq. \ref{eqn:GD-sampler} as
% \begin{align}\label{eqn:GD-sampler1}
% \min_{\eta_t\geq0} & \eta_t, \\
% \mbox{s.t.} & \mathbf{x}_{t+1} = \mathbf{x}_t - \eta_t\nabla\tilde{f}(\mathbf{x}_t), \nonumber \\
% & \max_{i=1,\cdots,t}\Big\{ f(\mathbf{x}_i) - L\left\|\mathbf{x}_{i} - \mathbf{x}_{t+1}\right\|_2\Big\} \leq \rho\min_{i=1,\cdots,t}f(\mathbf{x}_i). \nonumber
% \end{align}
% This leads to a second-order cone program (SOCP), which can be solved efficiently by interior point methods. 
% However, it still cannot solve the storage issue due to the access to $\mathbf{x}_1, \cdots, \mathbf{x}_t$, especially for high dimensional scenarios such as deep learning. Therefore we further make an additional assumption, that is,
% \begin{enumerate}[noitemsep][resume]
% \item[{\em A4.}] $\mathbf{x}_{t+1}$ can be sampled only based on $\mathbf{x}_t$ and $\nabla\tilde{f}(\mathbf{x}_t)$.
% \end{enumerate}
By imposing these assumptions upon Eq. \ref{eqn:GD-sampler}, we can directly compute the solution as follows:
\begin{align}\label{eqn:eta_t}
\eta_t = \frac{1}{L}\left[f(\mathbf{x}_t) - \rho\min_{i=1,\cdots,t}f(\mathbf{x}_i)\right].
\end{align}

To address problem {\em P3}, we utilize mini-batches to estimate $f(\mathbf{x}_t)$ and $\nabla\tilde{f}(\mathbf{x}_t)$ efficiently in each iteration.

In summary, we list our BPGrad solver in Alg.~\ref{alg:BnB-GD} by modifying Alg. \ref{alg:BnB} for the sake of fast sampling as well as low memory footprint in DL, but at the risk of being stuck in local regions. Fig. \ref{fig:sample} illustrates such scenarios in a 1D example. In (b) the sampling method falls into a loop because it does not consider the history of samples but only current one. In contrast, the sampling method in (a) is able to keep generating new samples by avoiding the RPS of previous samples with more computation and storage, as expected.

\begin{figure}[t]
	\begin{minipage}[b]{0.495\columnwidth}
		\begin{center}
			\centerline{\includegraphics[width=\columnwidth]{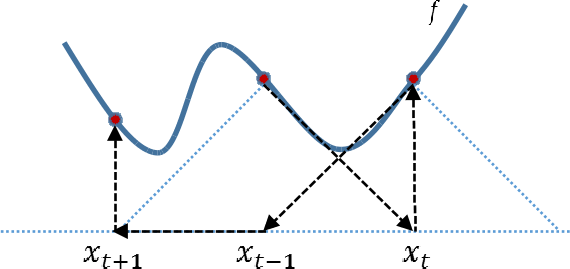}}
			\centerline{\footnotesize{(a) Sampling using Eq. \ref{eqn:GD-sampler}}}
		\end{center}
	\end{minipage}
	\begin{minipage}[b]{0.495\columnwidth}
		\begin{center}
			\centerline{\includegraphics[width=\columnwidth]{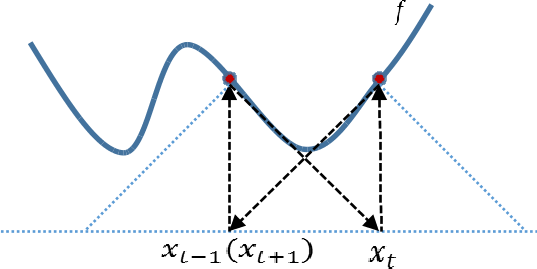}}
			\centerline{\footnotesize{(b) Sampling using Eq. \ref{eqn:eta_t}}}
		\end{center}
	\end{minipage}	
    \vspace{-7mm}
	\caption{\footnotesize{1D illustration of difference in sampling between {\bf (a)} using Eq.~\ref{eqn:GD-sampler} and {\bf (b)} using Eq. \ref{eqn:eta_t}. Here the solid blue lines denote function $f$, the black dotted lines denote the sampling paths starting from $\mathbf{x}_{t-1} \rightarrow \mathbf{x}_t \rightarrow \mathbf{x}_{t+1}$, and each big triangle surrounded by blue dotted lines denotes the RPS of each sample. As we see, (b) suffers from being stuck locally, while (a) can avoid the locality based on the RPS.
	}}\label{fig:sample}
    \vspace{-3mm}
\end{figure}

\subsection{Theoretical Analysis}
\begin{thm}[Global Property Preservation]\label{thm:fast_sample}
Let $\mathbf{x}_{t+1} = \mathbf{x}_t - \eta_t\nabla\tilde{f}(\mathbf{x}_t)$ where $\eta_t$ is computed using Eq. \ref{eqn:eta_t}. Then $\mathbf{x}_{t+1}$ satisfies Eq. \ref{eqn:sampling_rule} if it holds that
\begin{align}\label{eqn:fast_condition}
\hspace{-3mm}\left\langle\mathbf{x}_i-\mathbf{x}_t, \nabla\tilde{f}(\mathbf{x}_t)\right\rangle \geq \frac{f(\mathbf{x}_i) - f(\mathbf{x}_t)}{L}, \forall i=1,\cdots,t,
\end{align}
where $\langle\cdot,\cdot\rangle$ denotes the inner product between two vectors.
\end{thm}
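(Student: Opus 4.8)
The plan is to verify, for each fixed index $i\in\{1,\dots,t\}$, the single-ball inequality $f(\mathbf{x}_i)-L\|\mathbf{x}_i-\mathbf{x}_{t+1}\|_2\le\rho\min_{j=1,\dots,t}f(\mathbf{x}_j)$, from which Eq.~\ref{eqn:sampling_rule} follows immediately by taking the maximum over $i$. Rearranging, this amounts to establishing the distance lower bound $\|\mathbf{x}_i-\mathbf{x}_{t+1}\|_2\ge\frac{1}{L}\big[f(\mathbf{x}_i)-\rho\min_j f(\mathbf{x}_j)\big]$ between $\mathbf{x}_i$ and the newly generated sample $\mathbf{x}_{t+1}$.

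First I would exploit the fact that $\nabla\tilde{f}(\mathbf{x}_t)$ is a unit vector, so that $\|v\|_2\ge\langle v,\nabla\tilde{f}(\mathbf{x}_t)\rangle$ for every $v$ (Cauchy--Schwarz). Applying this with $v=\mathbf{x}_i-\mathbf{x}_{t+1}$, decomposing $\mathbf{x}_i-\mathbf{x}_{t+1}=(\mathbf{x}_i-\mathbf{x}_t)+(\mathbf{x}_t-\mathbf{x}_{t+1})$, and using the update rule $\mathbf{x}_t-\mathbf{x}_{t+1}=\eta_t\nabla\tilde{f}(\mathbf{x}_t)$ together with $\langle\nabla\tilde{f}(\mathbf{x}_t),\nabla\tilde{f}(\mathbf{x}_t)\rangle=1$, yields $\|\mathbf{x}_i-\mathbf{x}_{t+1}\|_2\ge\langle\mathbf{x}_i-\mathbf{x}_t,\nabla\tilde{f}(\mathbf{x}_t)\rangle+\eta_t$.

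Next I would invoke hypothesis~\ref{eqn:fast_condition} to bound the inner-product term below by $\frac{f(\mathbf{x}_i)-f(\mathbf{x}_t)}{L}$, and then substitute the closed form $\eta_t=\frac{1}{L}\big[f(\mathbf{x}_t)-\rho\min_j f(\mathbf{x}_j)\big]$ from Eq.~\ref{eqn:eta_t}. The two $f(\mathbf{x}_t)$ contributions cancel, leaving exactly $\|\mathbf{x}_i-\mathbf{x}_{t+1}\|_2\ge\frac{1}{L}\big[f(\mathbf{x}_i)-\rho\min_j f(\mathbf{x}_j)\big]$, i.e. the desired per-index bound; multiplying through by $L$ and maximizing over $i$ completes the argument.

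This proof has no serious obstacle, but if I had to name the crux it is the geometric observation that one should \emph{not} try to evaluate the full Euclidean distance $\|\mathbf{x}_i-\mathbf{x}_{t+1}\|_2$ directly (which would produce the awkward cross term $2\eta_t\langle\mathbf{x}_i-\mathbf{x}_t,\nabla\tilde{f}(\mathbf{x}_t)\rangle$ and an extra $\eta_t^2$), but instead project onto the gradient direction, trading an equality for the clean linear inequality above so that the $f(\mathbf{x}_t)$ terms cancel automatically. I would also remark in passing that $\eta_t\ge0$ — since $f\ge0$ and $0\le\rho<1$ force $f(\mathbf{x}_t)\ge\rho\min_j f(\mathbf{x}_j)$ — so the update is well defined, though this fact is not actually needed for the inequality chain.
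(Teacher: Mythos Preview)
Your argument is correct, and it is genuinely different from the paper's. The paper does precisely what you warn against in your last paragraph: it expands
\[
\|\mathbf{x}_i-\mathbf{x}_{t+1}\|_2^2
=\|\mathbf{x}_i-\mathbf{x}_t\|_2^2+\eta_t^2+2\eta_t\big\langle\mathbf{x}_i-\mathbf{x}_t,\nabla\tilde{f}(\mathbf{x}_t)\big\rangle,
\]
then lower-bounds $\|\mathbf{x}_i-\mathbf{x}_t\|_2^2$ by $\big(f(\mathbf{x}_i)-f(\mathbf{x}_t)\big)^2/L^2$ via the Lipschitz inequality~(Eq.~\ref{eqn:LF}), lower-bounds the cross term via hypothesis~\ref{eqn:fast_condition} (this step silently uses $\eta_t\ge 0$), substitutes Eq.~\ref{eqn:eta_t} for $\eta_t^2$, and observes that the three resulting pieces collapse into the perfect square $\frac{1}{L^2}\big[f(\mathbf{x}_i)-\rho\min_j f(\mathbf{x}_j)\big]^2$.

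Your projection-onto-$\nabla\tilde{f}(\mathbf{x}_t)$ route is cleaner: it sidesteps the perfect-square algebra, never invokes Lipschitz continuity of $f$, and---as you note---does not even require $\eta_t\ge 0$. The paper's route, on the other hand, makes the role of the Lipschitz assumption explicit (one of the three ingredients it cites), which fits the surrounding narrative but is logically unnecessary for this particular theorem given condition~\ref{eqn:fast_condition}.
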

\begin{proof}
Based on Eq. \ref{eqn:LF}, Eq. \ref{eqn:eta_t}, and Eq. \ref{eqn:fast_condition}, we have
\begin{align}
& \|\mathbf{x}_i-\mathbf{x}_{t+1}\|_2 \nonumber \\
& = \left(\|\mathbf{x}_i-\mathbf{x}_{t}\|_2^2+\eta_t^2+2\eta_t\left\langle\mathbf{x}_i-\mathbf{x}_t, \nabla\tilde{f}(\mathbf{x}_t)\right\rangle\right)^{\frac{1}{2}} \nonumber\\
&\geq \frac{1}{L}\left[f(\mathbf{x}_i) - \rho\min_{i=1,\cdots,t}f(\mathbf{x}_i)\right], \forall i=1,\cdots,t,
\end{align}
which is essentially equivalent to Eq. \ref{eqn:sampling_rule} based on algebra. We then can complete the proof.
\end{proof}

\begin{cor}\label{cor:1}
Suppose that a monotonically decreasing sequence $\{f(\mathbf{x}_i)\}_{i=1,\cdots,t}$ is generated to minimize function $f$ by sampling using Eq. \ref{eqn:eta_t}. Then the condition in Eq. \ref{eqn:fast_condition} can be rewritten as follows:
\begin{align}\label{eqn:cor}
\left\langle\mathbf{x}_i-\mathbf{x}_j, \nabla\tilde{f}(\mathbf{x}_j)\right\rangle \geq 0, \, 1\leq \forall i< \forall j\leq t.
\end{align}
\end{cor}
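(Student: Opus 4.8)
The plan is to argue that, under the stated hypotheses, Eq.~\ref{eqn:fast_condition} and Eq.~\ref{eqn:cor} encode the same requirement, so the latter may replace the former in Alg.~\ref{alg:BnB-GD}. First I would put the two statements on the same footing: demanding Eq.~\ref{eqn:fast_condition} at every step $t'\le t$ makes it read $\langle\mathbf{x}_i-\mathbf{x}_{t'},\nabla\tilde{f}(\mathbf{x}_{t'})\rangle\ge\frac{f(\mathbf{x}_i)-f(\mathbf{x}_{t'})}{L}$ for all $i\le t'$, and renaming $t'\to j$ ranges both inequalities over $1\le i<j\le t$ (the diagonal $i=j$ being $0\ge0$).

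The forward implication is immediate. Since $\{f(\mathbf{x}_i)\}$ is monotonically decreasing, $f(\mathbf{x}_i)\ge f(\mathbf{x}_j)$ for $i<j$, so the right-hand side of Eq.~\ref{eqn:fast_condition} is nonnegative; hence Eq.~\ref{eqn:fast_condition} already forces $\langle\mathbf{x}_i-\mathbf{x}_j,\nabla\tilde{f}(\mathbf{x}_j)\rangle\ge0$, which is Eq.~\ref{eqn:cor}.

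For the reverse implication I would use the explicit form of the iterates. Telescoping $\mathbf{x}_{k+1}=\mathbf{x}_k-\eta_k\nabla\tilde{f}(\mathbf{x}_k)$ gives $\mathbf{x}_i-\mathbf{x}_j=\sum_{k=i}^{j-1}\eta_k\nabla\tilde{f}(\mathbf{x}_k)$, hence $\langle\mathbf{x}_i-\mathbf{x}_j,\nabla\tilde{f}(\mathbf{x}_j)\rangle=\sum_{k=i}^{j-1}\eta_k\langle\nabla\tilde{f}(\mathbf{x}_k),\nabla\tilde{f}(\mathbf{x}_j)\rangle$. On the other side, Definition~\ref{def:Lipschitz} together with $\|\nabla\tilde{f}\|_2=1$ gives $f(\mathbf{x}_k)-f(\mathbf{x}_{k+1})\le L\eta_k$ for every $k$ (equivalently $f(\mathbf{x}_{k+1})\ge\rho f(\mathbf{x}_k)$, using monotonicity and Eq.~\ref{eqn:eta_t}), so $\frac{f(\mathbf{x}_i)-f(\mathbf{x}_j)}{L}=\frac1L\sum_{k=i}^{j-1}(f(\mathbf{x}_k)-f(\mathbf{x}_{k+1}))\le\sum_{k=i}^{j-1}\eta_k$. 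It therefore suffices to establish $\sum_{k=i}^{j-1}\eta_k\langle\nabla\tilde{f}(\mathbf{x}_k),\nabla\tilde{f}(\mathbf{x}_j)\rangle\ge\sum_{k=i}^{j-1}\eta_k$, and this I would obtain from Assumption~A3: when the step sizes are small, consecutive normalized gradients along the short polyline $\mathbf{x}_i\to\cdots\to\mathbf{x}_j$ are nearly collinear, so each $\langle\nabla\tilde{f}(\mathbf{x}_k),\nabla\tilde{f}(\mathbf{x}_j)\rangle$ is close to $1$, while the nonnegativity supplied by Eq.~\ref{eqn:cor} keeps every term on the favorable side of the comparison.

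The hard part is exactly this reverse implication: Eq.~\ref{eqn:cor} controls only the sign of $\langle\mathbf{x}_i-\mathbf{x}_j,\nabla\tilde{f}(\mathbf{x}_j)\rangle$, whereas Eq.~\ref{eqn:fast_condition} asks for the quantitative lower bound $\frac{f(\mathbf{x}_i)-f(\mathbf{x}_j)}{L}$, and the $O(\eta_k)$ gap between $\langle\nabla\tilde{f}(\mathbf{x}_k),\nabla\tilde{f}(\mathbf{x}_j)\rangle$ and $1$ is genuine in general. I would therefore state the equivalence within the small-step regime that Alg.~\ref{alg:BnB-GD} operates in (Assumptions~A1--A4), and present the forward implication as the rigorous half that the branch-and-prune guarantee of Thm.~\ref{thm:fast_sample} actually relies on, namely that enforcing Eq.~\ref{eqn:cor} is necessary for the fast sampling step to remain valid.
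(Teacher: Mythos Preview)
The paper gives no proof of this corollary; it is stated immediately after Thm.~\ref{thm:fast_sample} and followed directly by the Discussion paragraph. The only argument implicit in the paper is the one you call the ``forward implication'': under monotonicity $f(\mathbf{x}_i)\ge f(\mathbf{x}_j)$ for $i<j$, so the right-hand side of Eq.~\ref{eqn:fast_condition} is nonnegative and Eq.~\ref{eqn:fast_condition} immediately yields Eq.~\ref{eqn:cor}. That one line is all the paper intends, and you have it correctly.

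Where you diverge from the paper is in taking ``rewritten'' to mean a genuine two-sided equivalence and then attempting to recover Eq.~\ref{eqn:fast_condition} from Eq.~\ref{eqn:cor}. The paper never does this, and your own analysis shows why: Eq.~\ref{eqn:cor} controls only the \emph{sign} of the inner product, while Eq.~\ref{eqn:fast_condition} demands the quantitative lower bound $\frac{f(\mathbf{x}_i)-f(\mathbf{x}_j)}{L}>0$. Your telescoping argument reduces the gap to showing $\langle\nabla\tilde{f}(\mathbf{x}_k),\nabla\tilde{f}(\mathbf{x}_j)\rangle\ge 1$ termwise, which is false for unit vectors unless all gradients are exactly collinear; invoking Assumption~A3 to say they are ``nearly'' collinear gives at best an approximate inequality, not the stated one. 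So the reverse direction is not a theorem under the paper's hypotheses, and the paper does not claim it is. The Discussion that follows the corollary uses Eq.~\ref{eqn:cor} only qualitatively (``prefers sampling \ldots along a path towards a single direction''), which is consistent with Eq.~\ref{eqn:cor} being a \emph{necessary} consequence of Eq.~\ref{eqn:fast_condition}, not an equivalent reformulation.

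In short: your forward implication matches the paper's (unstated) argument exactly; your reverse implication is extra work the paper neither attempts nor needs, and as you yourself flag, it does not go through rigorously. I would drop the reverse direction and present the corollary as the simple observation that monotonicity makes the right-hand side of Eq.~\ref{eqn:fast_condition} nonnegative.
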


\noindent
{\bf Discussion:}
Both Thm. \ref{thm:fast_sample} and Cor. \ref{cor:1} imply that our solver prefers sampling the parameter space along a path towards a single direction, roughly speaking. However, the gradients in conventional backpropagation have little guarantee to satisfy Eq.~\ref{eqn:fast_condition} or Eq.~\ref{eqn:cor} due to lack of such constraints in learning. On the other hand, momentum~\cite{sutskever2013importance} is a well-known technique in deep learning to dampen oscillations in gradients and accelerate directions of low curvature. Therefore, our solver in Alg.~\ref{alg:BnB-GD} involves momentum to compensate such drawbacks in backpropagation for better approximation of Alg. \ref{alg:BnB}.

\subsection{Empirical Justification}

In this section we discuss the feasibility of the assumptions {\em A1-A4} for reducing computation and storage as well as preserving the properties towards global optimization in deep learning. We utilize MatConvNet \cite{vedaldi2015matconvnet} as our testbed, and run our solver in Alg. \ref{alg:BnB-GD} to train the default networks in MatConvNet for MNIST \cite{lecun1998mnist} and CIFAR10 \cite{krizhevsky2012imagenet}, respectively, using the default parameters without explicit mention. Also we set $N=1, \mu=0, L=15$ for MNIST and $L=50$ for CIFAR10 by default. For justification purpose we only run 4 epochs on each dataset, 600 and 500 iterations per epoch for MNIST and CIFAR10, respectively. For more experimental details, please refer to Sec.~\ref{sec:exp}.

Essentially assumption {\em A1} is made to support the other three to simplify the objective in Eq. \ref{eqn:GD-sampler}, and assumption {\em A2} usually holds in deep learning due to its high dimensionality. Therefore, below we only focus on empirical justification of assumptions {\em A3} and {\em A4}. 

\begin{wrapfigure}{r}{.48\linewidth}
	\vspace{-25pt}
	\begin{center}
		\includegraphics[width=\linewidth]{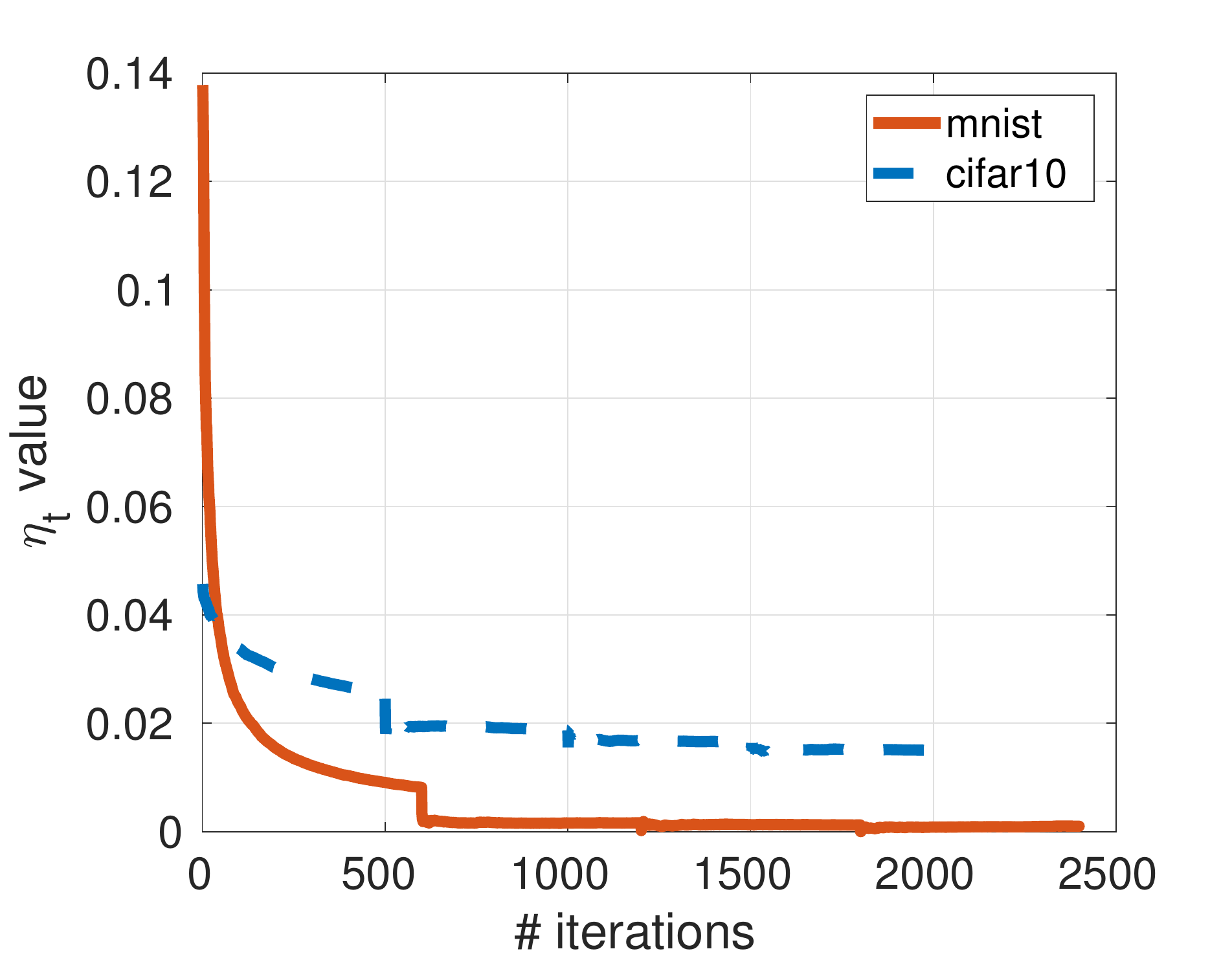}
		%\vspace{-5mm}
		\caption{\footnotesize Plots of $\eta_t$ on MNIST and CIFAR10, respectively.}
		\label{fig:eta}
	\end{center}
	\vspace{-10pt}
	%  \vspace{1pt}
\end{wrapfigure}

\noindent
{\bf Feasibility of {\em A3}:}
To justify this, we collect $\eta_t$'s by running Alg. \ref{alg:BnB-GD} on both datasets, and plot them in Fig.~\ref{fig:eta}. Overall these numbers are indeed sufficiently small for local update based on gradients, and $\eta_t$ decreases with the increase of iterations, in general. This behavior is expected as the objective $f$ is supposed to decrease as well \wrt iterations. The value gap at the beginning on the two datasets is induced mainly by different $L$'s. 

\begin{figure}[t]
	\begin{minipage}[b]{0.495\columnwidth}
		\begin{center}
			\centerline{\includegraphics[width=1.05\columnwidth]{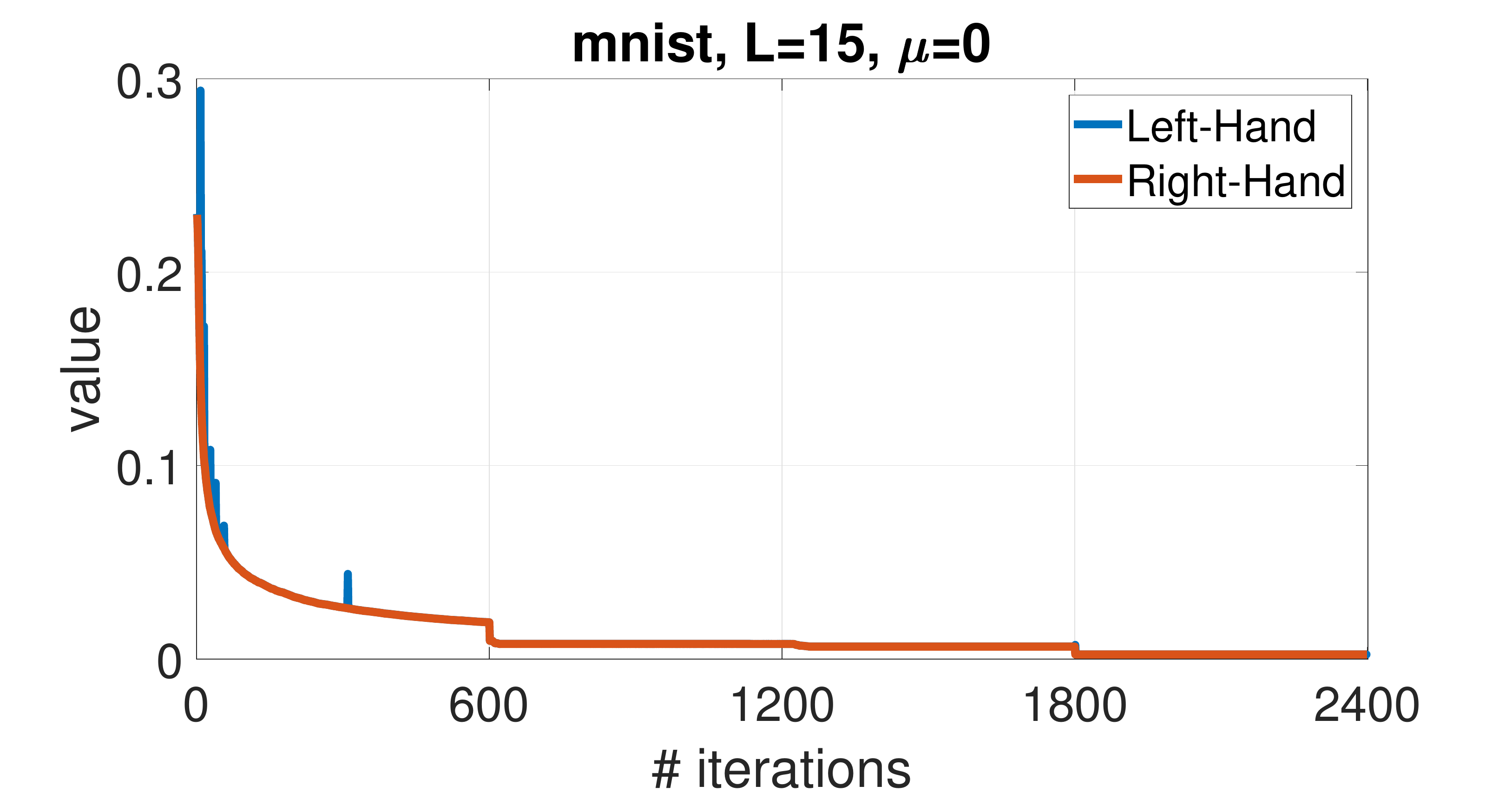}}
%			\centerline{\footnotesize }
		\end{center}
	\end{minipage}
% 	\begin{minipage}[b]{0.495\columnwidth}
% 		\begin{center}
% 			\centerline{\includegraphics[width=1.1\columnwidth]{mnist_L_50_m_0.eps}}
% %			\centerline{\footnotesize }
% 		\end{center}
% 	\end{minipage}		
	\begin{minipage}[b]{0.495\columnwidth}
		\begin{center}
			\centerline{\includegraphics[width=1.05\columnwidth]{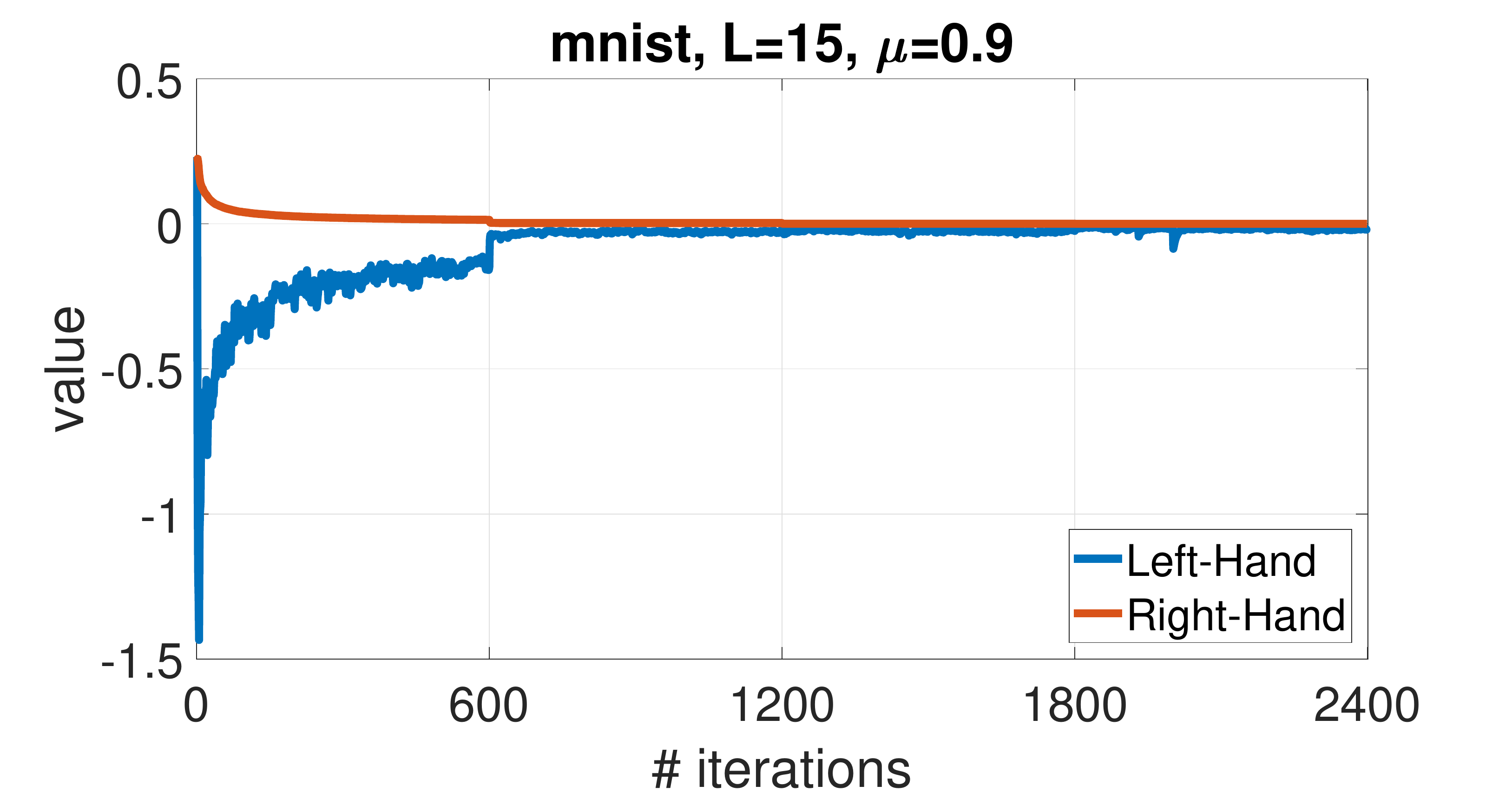}}
%			\centerline{\footnotesize }
		\end{center}
	\end{minipage}
% 	\begin{minipage}[b]{0.495\columnwidth}
% 		\begin{center}
% 			\centerline{\includegraphics[width=1.1\columnwidth]{mnist_L_50_m_0_9.eps}}
% %			\centerline{\footnotesize }
% 		\end{center}
% 	\end{minipage}	
% 	\begin{minipage}[b]{0.495\columnwidth}
% 		\begin{center}
% 			\centerline{\includegraphics[width=1.1\columnwidth]{cifar_L_10_m_0.eps}}
% 			%			\centerline{\footnotesize }
% 		\end{center}
% 	\end{minipage}
	\begin{minipage}[b]{0.495\columnwidth}
		\begin{center}
			\centerline{\includegraphics[width=1.05\columnwidth]{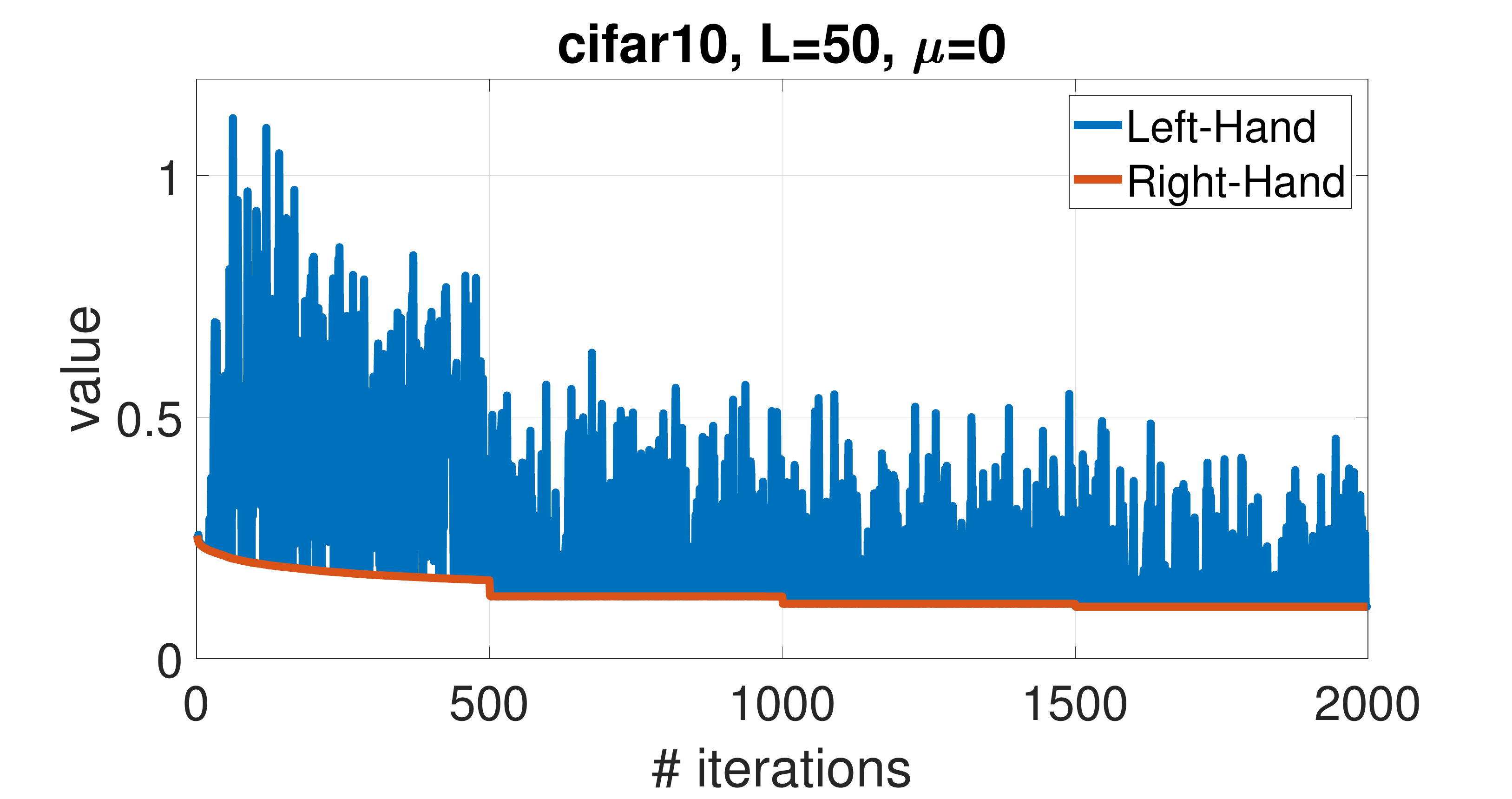}}
			%			\centerline{\footnotesize }
		\end{center}
	\end{minipage}
% 	\begin{minipage}[b]{0.495\columnwidth}
% 		\begin{center}
% 			\centerline{\includegraphics[width=1.1\columnwidth]{cifar_L_10_m_0_9.eps}}
% %			\centerline{\footnotesize }
% 		\end{center}
% 	\end{minipage}
	\begin{minipage}[b]{0.495\columnwidth}
		\begin{center}
			\centerline{\includegraphics[width=1.05\columnwidth]{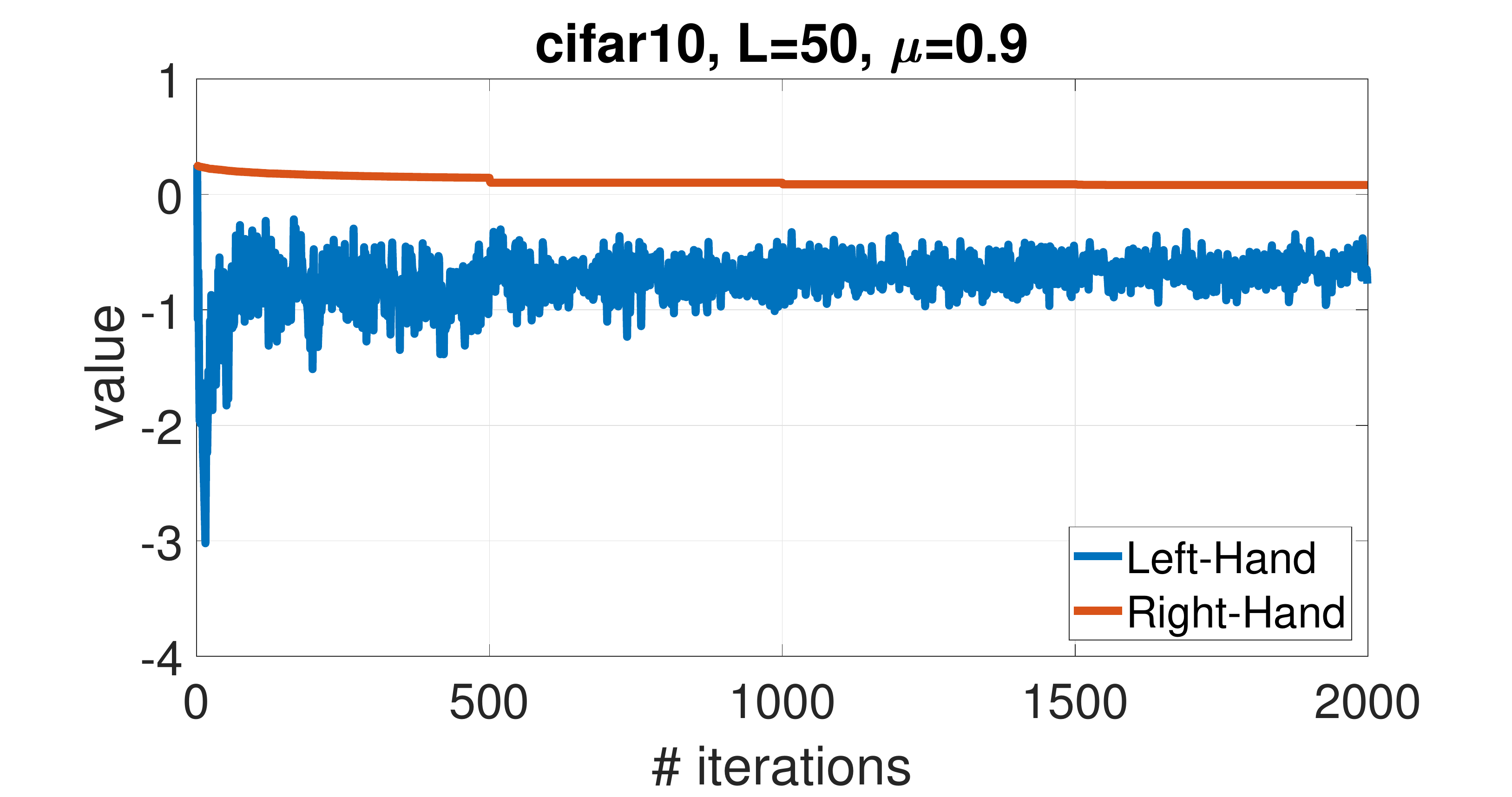}}
%			\centerline{\footnotesize }
		\end{center}
	\end{minipage}
	\vspace{-7mm}
	\caption{\footnotesize Comparison between LHS and RHS of Eq.~\ref{eqn:sampling_rule} based on $\mathbf{x}_t$ returned by Alg.~\ref{alg:BnB-GD} using different values for momentum parameter $\mu$.}
	\label{fig:A4}
    \vspace{-3mm}
\end{figure}

\noindent
{\bf Feasibility of {\em A4}:}
% Interestingly in Fig. \ref{fig:eta} the curves based on either Eq.~\ref{eqn:GD-sampler1} or Eq.~\ref{eqn:eta_t} behave very similarly on both datasets. This has already indicated the feasibility of A4. 
To justify this, we show some evidences in Fig. \ref{fig:A4}, where we plot the left-hand side (LHS) and right-hand side (RHS) of Eq. \ref{eqn:sampling_rule} based on $\mathbf{x}_t$ returned by Alg.~\ref{alg:BnB-GD}. As we see in all the subfigures on the right with $\mu=0$ the values on RHS are always no smaller than those on LHS correspondingly. In contrast, in the remaining subfigures on the left with $\mu=0$ (\ie conventional SGD update) the values on RHS are always no bigger than those on LHS correspondingly. These observations appear to be robust across different datasets, and irrelevant to parameter $L$ which determines the radius of balls, \ie step sizes for gradients. The momentum parameter $\mu$, which is related to the directions of gradients for updating models, appear to be the only factor to make the samples of our solver satisfy Eq. \ref{eqn:sampling_rule}. This also supports our claims in Thm. \ref{thm:fast_sample} and Cor. \ref{cor:1} about the relation between model update and gradient in order to satisfy Eq.~\ref{eqn:sampling_rule}. More evidences have been provided in Sec. \ref{sssec:m&c}. Given these evidences we hypothesize that assumption {\em A4} may hold empirically when using sufficiently large values for $\mu$.

%%%%%%%%%%%%%%%%%%%%%%%%%%%%%%%%%%%%%%%%%%%%%%%%%%	
\section{Experiments}\label{sec:exp}
To demonstrate the generalization of our BPGrad solver, we test it in the applications of object recognition, detection, and segmentation by training deep convolutional neural networks (CNNs). We utilize MatConvNet as our testbed, and employ its demo code as well as default network architectures for different tasks. Since our solver has the ability of determining learning rates adaptively, we compare ours with another four widely used DL solvers with adaptive learning rates, namely Adagrad, Adadelta, RMSProp, and Adam. We tune the parameters in these solvers to achieve their best performance as we can. % \textcolor{red}{"We also realize that the learning rates for the default SGD solver in the demo code is heavily tuned, and thus decide not to compare with it." (we didn't compare SGD in the experiments, if we add SGD here, do u think is it possible that the reviewer might argue our BPGrad is worse than SGD? Maybe we delete this sentense)}

%%%%%%%%%%%%%%%%%%%%%%%%%%%%%%%%%%%%%%%%%%%%%%%%%%
\subsection{Object Recognition}
% For this task, we first demonstrate our solver on MNIST and CIFAR10 as toy examples, and then test it on ImageNet ILSVRC2012 dataset~\cite{krizhevsky2012imagenet} for real application.

\subsubsection{MNIST \& CIFAR10}\label{sssec:m&c}
The MNIST digital dataset consists of a training set of $60K$ images and a test set of $10K$ images in $10$ classes labeled from 0 to 9, where all images have the resolution of $28\times28$ pixels. The CIFAR-10 dataset consists of a training set of $50K$ images and a test set of $10K$ images in 10 object classes, where the image resolution is $32\times32$ pixels. 

We follow the default implementation to train an individual CNN similar to LeNet-$5$~\cite{lecun1998gradient} on each dataset. For the details of network architectures please refer to the demo code. %The network trained on MNIST comprises $8$ layers excluding the input. It consists of $2$ blocks of $5\times5$ convolutional layers ($1$ layer per block, with $20$ and $50$ filters per layer in the first and second block respectively) each followed by $2\times2$ max pooling. Then it has one $4\times4$ convolutional layer with $500$ units followed by a ReLU~\cite{nair2010rectified}. The convolutional layers are followed by a fully connected layer with $500$ units and a $10$-way softmax layer. The model trained on CIFAR10 consists of $13$ layers, not counting the input. It comprises $3$ blocks of $5\times5$ convolutional layers each followed by $3\times3$ max/average pooling. The first block contained $1$ layer with $32$ filters followed by a max pooling. The second and third blocks consist of $1$ layer per block with $32$ and $64$ filters respectively, each followed by an average pooling. Then it has one $4\times4$ convolutional layer with $64$ units followed by a ReLU. The convolutional layers are followed by $1$ fully connected layers with $64$ units, and finally a $10$-way softmax.  
Specifically for all the solvers, we train the networks for $50$ and $100$ epochs on MNIST and CIFAR10, respectively, with a mini-batch size $100$, weight decay $0.0005$, and momentum $0.9$. In addition, we fix the initial weights for two networks and the feeding order of mini-batches for fair comparison. The global learning rate is set to $0.001$ on MNIST for Adagrad, RMSProp and Adam. On CIFAR10, the global learning rate is set to $0.001$ for RMSProp, but to $0.01$ for Adagrad, Adam and Eve~\cite{koushik2016improving}, and it is reduced to $0.005$ and $0.001$ at the $31$-st and $61$-st epoch. % similar to the learning rate setup in the demo code. 
Adadelta does not require the global learning rate.  %For BPGrad, we performed grid search to find the optimal values of two hyperparameters, including the Lipschitz constant $L$ and lower-bound parameter $\rho$. The best test performances are obtained by choosing $L=15$ and $\rho=\text{0.1}$. 

\begin{figure}[t]
	\begin{minipage}[b]{0.495\columnwidth}
		\begin{center}
			\centerline{\includegraphics[width=\columnwidth,height=0.75\columnwidth]{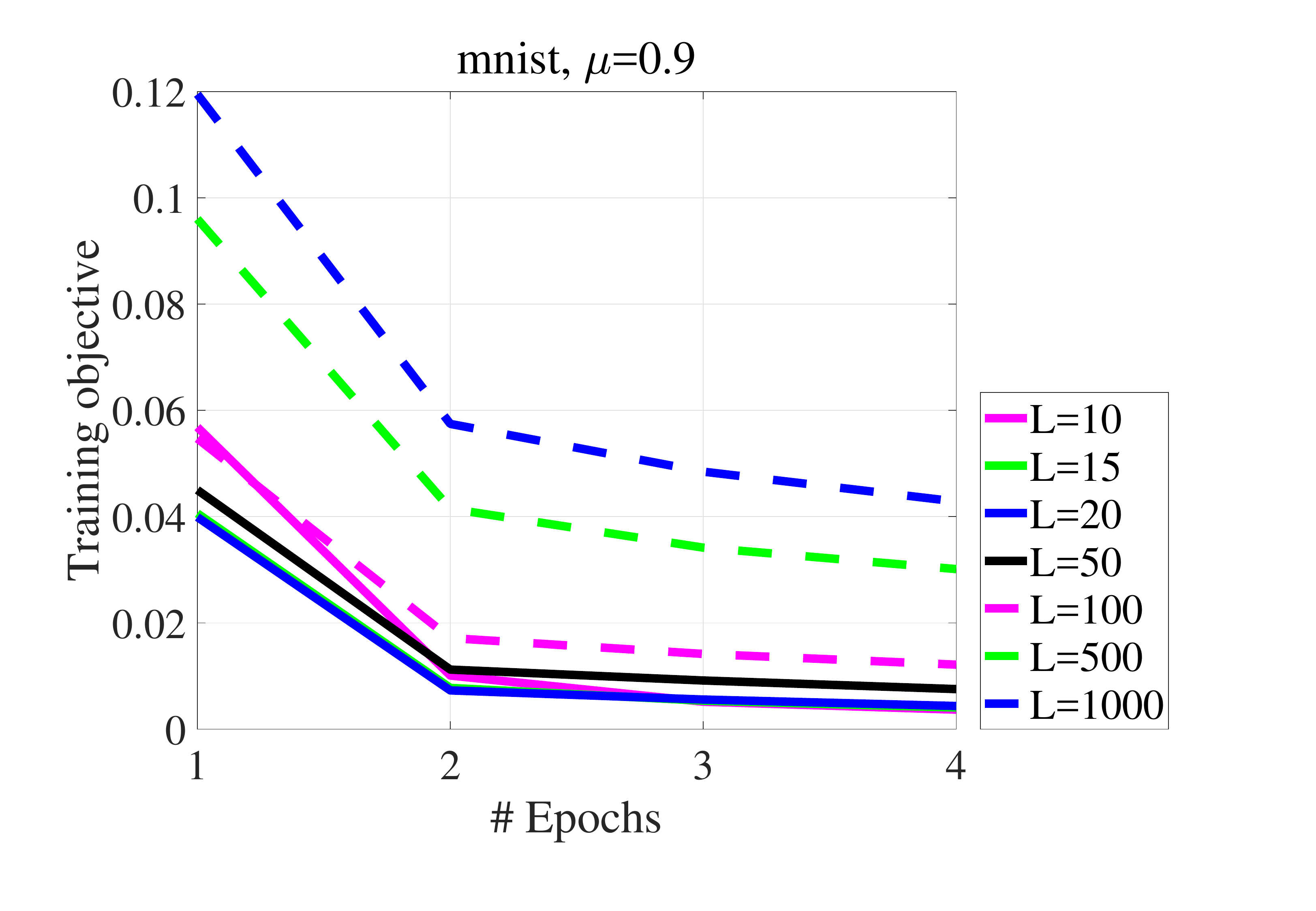}}			
		\end{center}
	\end{minipage}
	\begin{minipage}[b]{0.495\columnwidth}
		\begin{center}
			\centerline{\includegraphics[width=\columnwidth,height=0.75\columnwidth]{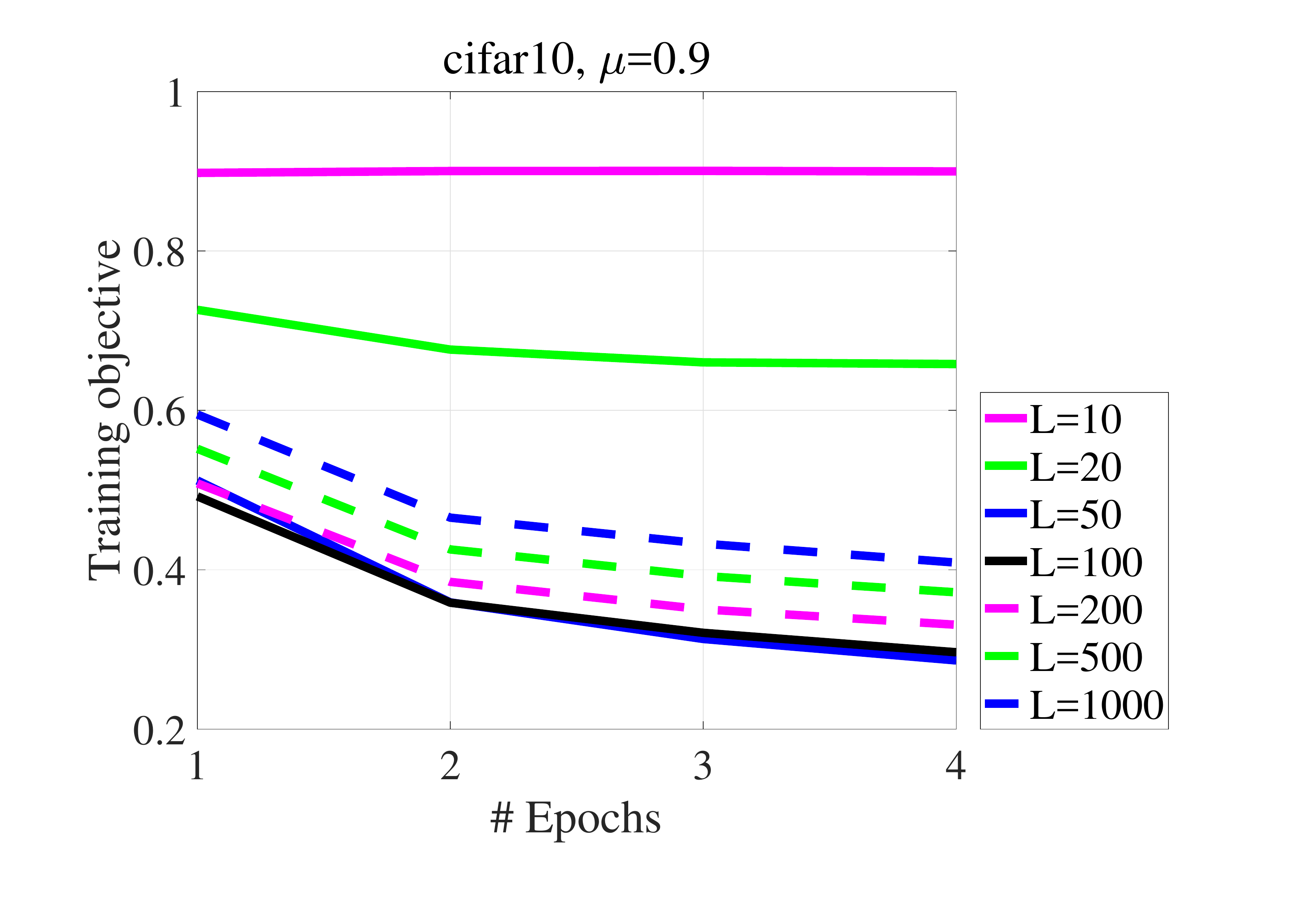}}
		\end{center}
	\end{minipage}
	\vspace{-10mm}
	\caption{\footnotesize Illustration of robustness of Lipschitz constant $L$ in our solver.}\label{fig:L}
\end{figure}

\begin{figure}[t]
	\begin{minipage}[b]{0.495\columnwidth}
		\begin{center}
			\centerline{\includegraphics[width=\columnwidth,height=0.77\columnwidth]{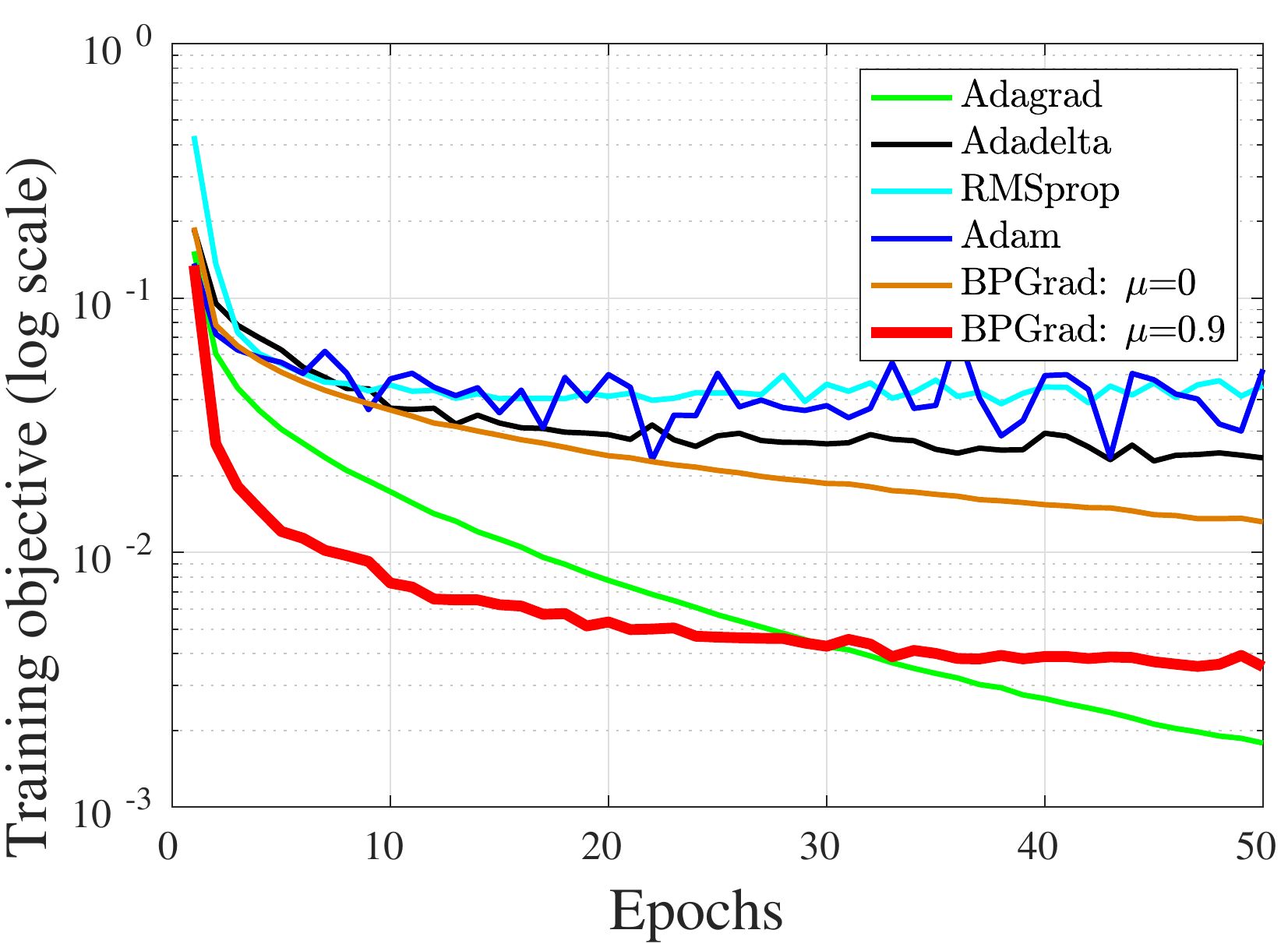}}			
		\end{center}
	\end{minipage}
%	\begin{minipage}[b]{0.495\columnwidth}
%		\begin{center}
%			\centerline{\includegraphics[width=\columnwidth]{mnist_train_top1error.eps}}
%		\end{center}
%	\end{minipage}
%	\begin{minipage}[b]{0.495\columnwidth}
%		\begin{center}
%			\centerline{\includegraphics[width=\columnwidth]{mnist_test_objective.eps}}			
%		\end{center}
%	\end{minipage}
	\begin{minipage}[b]{0.495\columnwidth}
		\begin{center}
			\centerline{\includegraphics[width=\columnwidth,height=0.75\columnwidth]{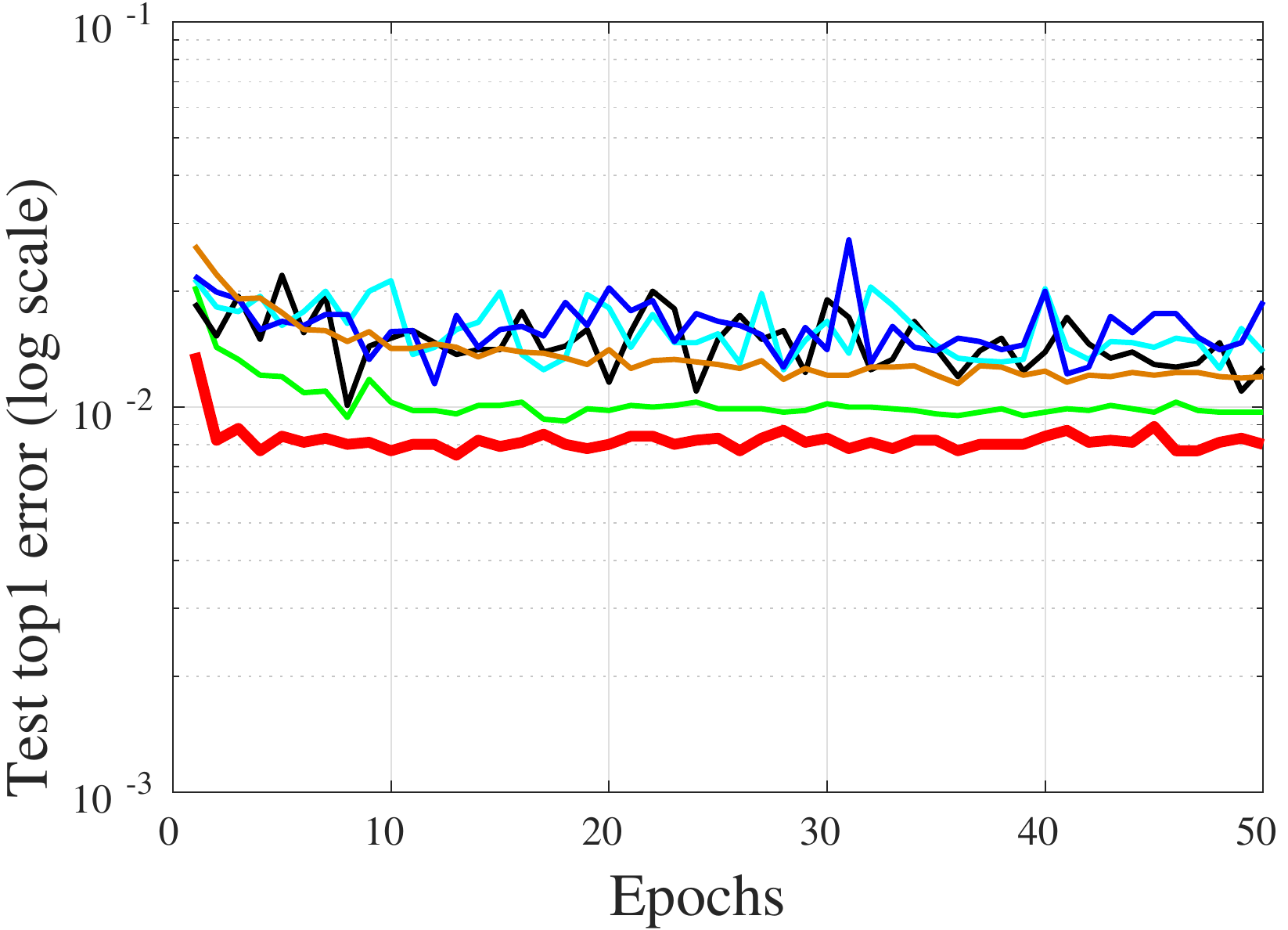}}
		\end{center}
	\end{minipage}	
	%%%%%%%%%%%%%%%%%%%%%%%%%%%%%%%%%%%%%%%%%%%%%%%%%%%%%%%%    
	\begin{minipage}[b]{0.495\columnwidth}
		\begin{center}			\centerline{\includegraphics[width=\columnwidth,height=0.75\columnwidth]{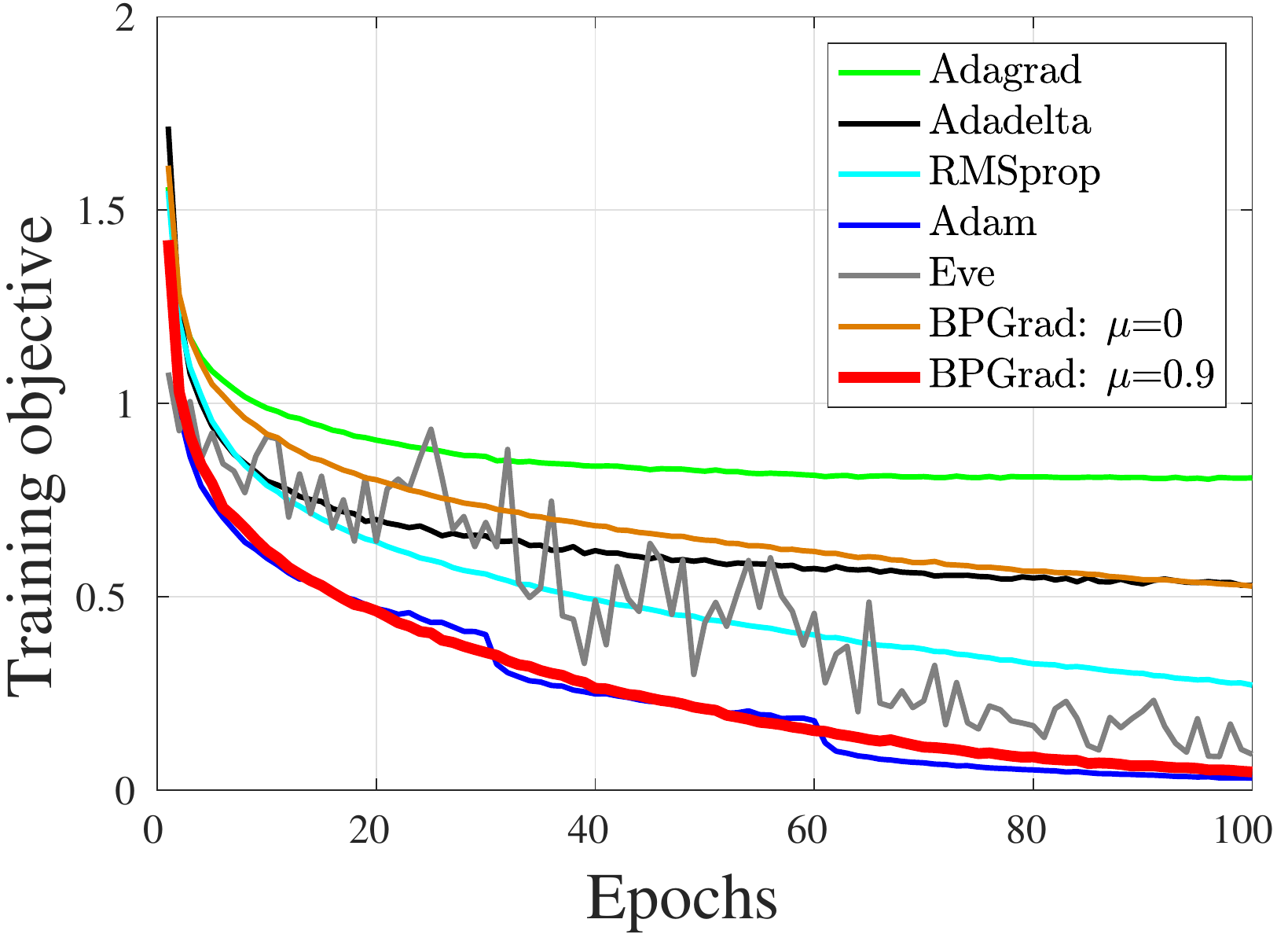}}			
		\end{center}
	\end{minipage}
%	\begin{minipage}[b]{0.495\columnwidth}
%		\begin{center}
%			\centerline{\includegraphics[width=\columnwidth]{cifar10_train_top1error.eps}}
%		\end{center}
%	\end{minipage}	
%	\begin{minipage}[b]{0.495\columnwidth}
%		\begin{center}
%			\centerline{\includegraphics[width=\columnwidth]{cifar10_test_objective.eps}}			
%		\end{center}
%	\end{minipage}
	\begin{minipage}[b]{0.495\columnwidth}
		\begin{center}
			\centerline{\includegraphics[width=\columnwidth,height=0.75\columnwidth]{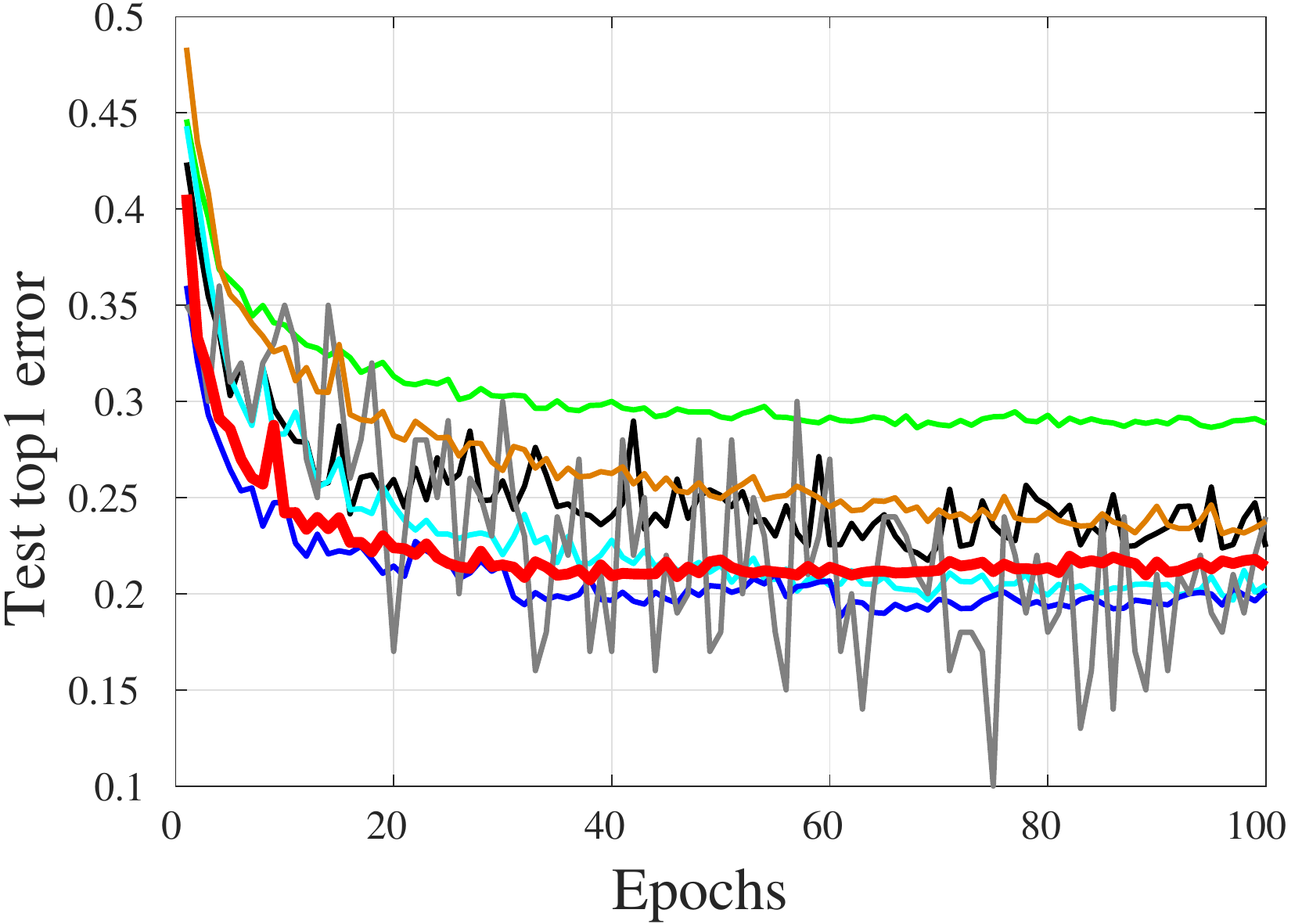}}
		\end{center}
	\end{minipage}		
	\vspace{-7mm}
	\caption{\footnotesize Comparison on {\bf (left)} training objectives and {\bf (right)} test top-1 errors for object recognition using {\bf (top)} MNIST and {\bf (bottom)} CIFAR10.}\label{fig:classification}
    \vspace{-3mm}
\end{figure}

For our solver, the parameters $n$ and $N$ typically depend on the numbers of mini-batches and epochs, respectively. Empirically we find that $N=1$ seems to work well, and thus we use it by default for all the experiments. Accordingly by default $n$ will be set to the product of the numbers of mini-batches and epochs. 

Also we find that the parameter $L$ as Lipschitz constant is quite robust \wrt performance, indicating that heavily tuning this parameter is unnecessary in practice. To demonstrate this, we compare the training objectives of our solver by varying $L$ in Fig.~\ref{fig:L}. To highlight the differences, here we crop and show the results in the first four epochs, but note that the remaining results have similar behavior. As we can see on MNIST when $L$ varies from 10 to 100, the corresponding curves are clustered, similarly on CIFAR10 for $L$ from 50 to 1000. We decide to set $L=15$ for MNIST and $L=50$ for CIFAR10, respectively, in our solver.

Next we show the solver comparison results in Fig. \ref{fig:classification}. To illustrate the effect of momentum in our solver in terms of performance, here we plot two variants of our solver with $\mu=0$ and $\mu=0.9$, respectively. As we see our solver with $\mu=0.9$ works much better than the counterpart, achieving lower training objectives as well as lower top-1 error at test time. This again provides evidence to support the importance of satisfying Eq. \ref{eqn:sampling_rule} in our solver to search for good solutions toward global optimality.

Overall, our solver performs best on MNIST and slightly inferior on CIFAR10 at test time, although in terms of training objective it achieves competitive performance on MNIST and the best on CIFAR10. We hypothesize that this behavior comes from the effect of regularization on Lipschitz continuity. However, our solver can decrease the objectives much faster than all the competitors in the first few epochs. This observation reflects the superior ability of our solver in determining adaptive learning rates for gradients. Especially on CIFAR10 we also compare an extra solver Eve based on our implementation. Eve was proposed in recent related work~\cite{koushik2016improving} that improves Adam with the feedbacks from the objective function, and tested on CIFAR10 as well. As we can see, our solver is much more reliable, performing consistently over epochs.

%%%%%%%%%%%%%%%%%%%%%%%%%%%%%%%%%%%%%%%%%%%%%%%%%%%
\subsubsection{ImageNet ILSVRC2012 \cite{krizhevsky2012imagenet}}\label{sub:imagenet}
This dataset contains about $1.28M$ training images and $50K$ validation images among $1000$ object classes. Following the demo code, we train the same AlexNet~\cite{krizhevsky2012imagenet} on it from the scratch using different solvers. We perform training for $20$ epochs, with a mini-batch size $256$, weight decay $0.0005$, momentum $0.9$, and default learning rates for the competitors. For our solver we set $L=100$.

\begin{figure}[t]		
	%%%%%%%%%%%%%%%%%%%%%%%%%%%%%%%%%%%%%%%%%%%%%%%%%%%%%%%%%%%%%%%%%%%
	\begin{minipage}[b]{0.495\columnwidth}
		\begin{center}
			\centerline{\includegraphics[width=\columnwidth,height=0.75\columnwidth]{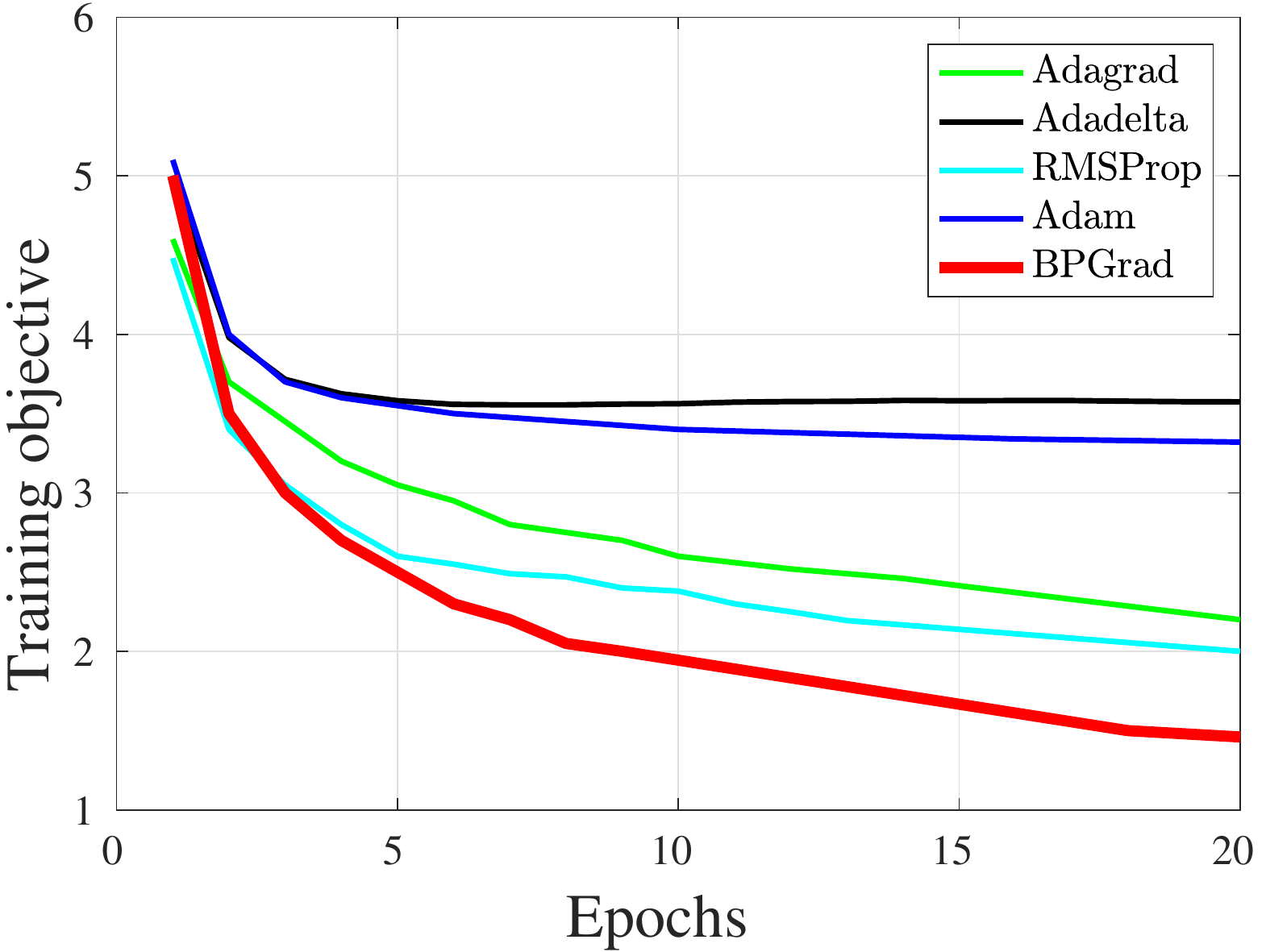}}
		\end{center}
	\end{minipage}	
%	\hspace{2mm}
%	\begin{minipage}[b]{0.495\columnwidth}
%		\begin{center}
%			\centerline{\includegraphics[width=\columnwidth]{ImageNet_alexnet_train_top1error.eps}}
%		\end{center}
%	\end{minipage}\hspace{2mm}	
%	\begin{minipage}[b]{0.495\columnwidth}
%		\begin{center}
%			\centerline{\includegraphics[width=\columnwidth]{ImageNet_alexnet_val_loss.eps}}			
%		\end{center}
%	\end{minipage}	\hspace{2mm}
	\begin{minipage}[b]{0.495\columnwidth}
		\begin{center}
			\centerline{\includegraphics[width=\columnwidth,height=0.75\columnwidth]{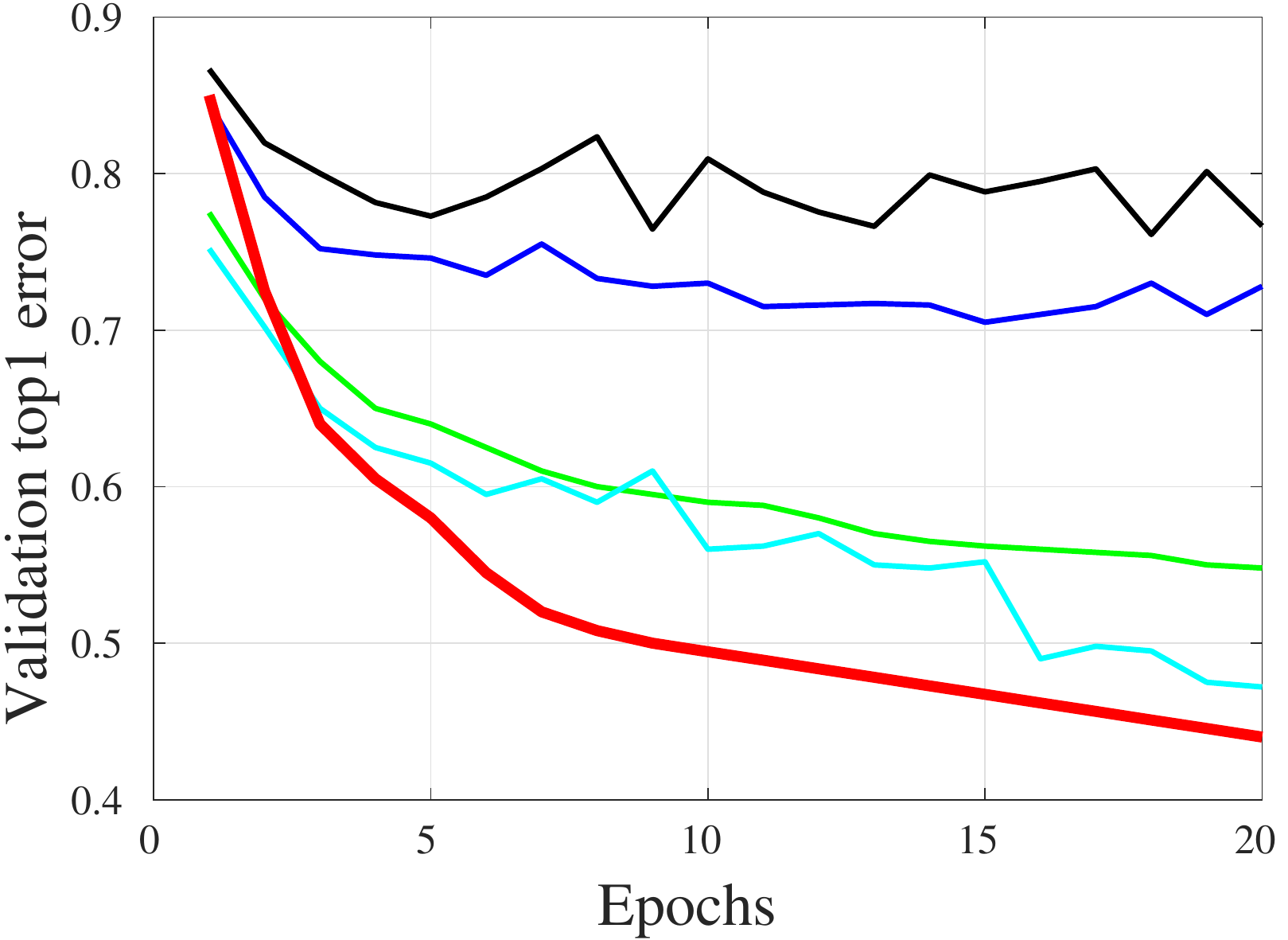}}		
		\end{center}
	\end{minipage}	
	\vspace{-7mm}
	\caption{\footnotesize Comparison on {\bf (left)} training objectives and {\bf (right)} validation top-1 errors for object recognition using ImageNet ILSVRC2012.}\label{fig:imagenet}
\end{figure}
 
\setlength{\tabcolsep}{2.5pt}
\begin{table}[t]\small
	\begin{center}    
		\begin{tabular}{|c|c|c|c|c|c|}
			\hline 
								& Adagrad & Adadelta & RMSProp & Adam & BPGrad  \\ \hline
			\shortstack{training}		& 49.0 & 71.6 & 46.0 & 70.0 & \textbf{33.0} 	\\ \hline
			\shortstack{validation}    & 54.8 & 76.7 & 47.2 & 72.8 & \textbf{44.0}  \\ \hline
		\end{tabular}
	\end{center}
	\caption{\footnotesize Top-1 recognition error ($\%$) on ImageNet ILSVRC2012 dataset.}
	\label{table:imagenet}
    \vspace{-3mm}
\end{table}

\setlength{\tabcolsep}{2.0pt}
\begin{table*}[t]\small
	\begin{center}
	\begin{tabular}{|c|c|c|c|c|c|c|c|c|c|c|c|c|c|c|c|c|c|c|c|c|c|}
    \hline
& aero & bike & bird & boat & bottle & bus & car & cat & chair & cow & table & dog & horse & mbike & persn & plant & sheep & sofa & train & tv & mAP\\ \hline
%SGD & 72.0 & 77.8 & 65.7 & 50.9 & 40.0 & 78.1 & 78.2 & 80.5 & 41.3 & 73.2 & 66.8 & 78.5 &81.8 & 73.6 & 66.8 & 29.5 &65.6 & 69.4 & 75.0 & 61.9 & 66.3 \\ \hline
Adagrad & 67.5 & 71.5 & 60.7 & 47.1 & 28.3 & 72.7 & 76.7 & 77.0 & 34.3 & 70.2 & 64.0 & 72.0 & 74.2 & 69.5 & 64.9 & 28.8 & 57.4 & 60.5 & 73.1 & 61.1  & 61.7    \\ \hline
RMSProp & 69.1 & 75.8 & 61.5 & 47.9 & 30.2 & 74.7 & 77.1 & 79.4 & 33.2 & 71.1 & 66.3 & 74.4 & 76.3 & 69.9 & 65.1 & 28.9 & 62.9 & 62.5 & 73.2 & 60.8  & 63.0    \\ \hline
Adam & 68.9 & \textbf{79.9} & 64.1 & \textbf{56.6} & 37.0 & \textbf{77.4} & \textbf{77.7} & 82.5 & 38.2 & 71.5 & 64.7 & \textbf{77.6} & 77.7 & \textbf{75.0} & \textbf{66.8} & 30.6 & \textbf{65.9} & 65.1 & \textbf{74.4} & \textbf{67.9}  & \textbf{66.0} \\ \hline
BPGrad & \textbf{69.4} & 77.7 & \textbf{66.4} & 55.1 & \textbf{37.2} & 76.1 & \textbf{77.7} & \textbf{83.6} & \textbf{38.6} & \textbf{73.8} & \textbf{67.4} & 76.0 & \textbf{81.9} & 72.7 & 66.3 & \textbf{31.0} & 64.2 &\textbf{66.2} & 73.8 & 64.9 & \textbf{66.0} \\ \hline
	\end{tabular}
	\end{center}
	\caption{\footnotesize Average precision (AP, \%) of object detection on VOC2007 test dataset.}
	\label{table:detection_fast_rcnn}
    \vspace{-3mm}
\end{table*}

We show the comparison results in Fig.~\ref{fig:imagenet}. It is evident that our solver works the best at both training and test time. Namely, it converges faster to achieve lower objective as well as lower top-1 error on validation dataset. In terms of numbers, ours is 3.2\% lower than the second best, RMSProp, at the $20$-th epoch as listed in Table \ref{table:imagenet}. 

Based on all the experiments above we conclude that our solver is suitable to train deep models for object recognition.

%%%%%%%%%%%%%%%%%%%%%%%%%%%%%%%%%%%%%%%%%%%%%%%%%%
\subsection{Object Detection}\label{sub:detection}

\begin{figure}[t]
	\begin{minipage}[b]{0.495\columnwidth}
		\begin{center}
			\centerline{\includegraphics[width=\columnwidth,height=0.75\columnwidth]{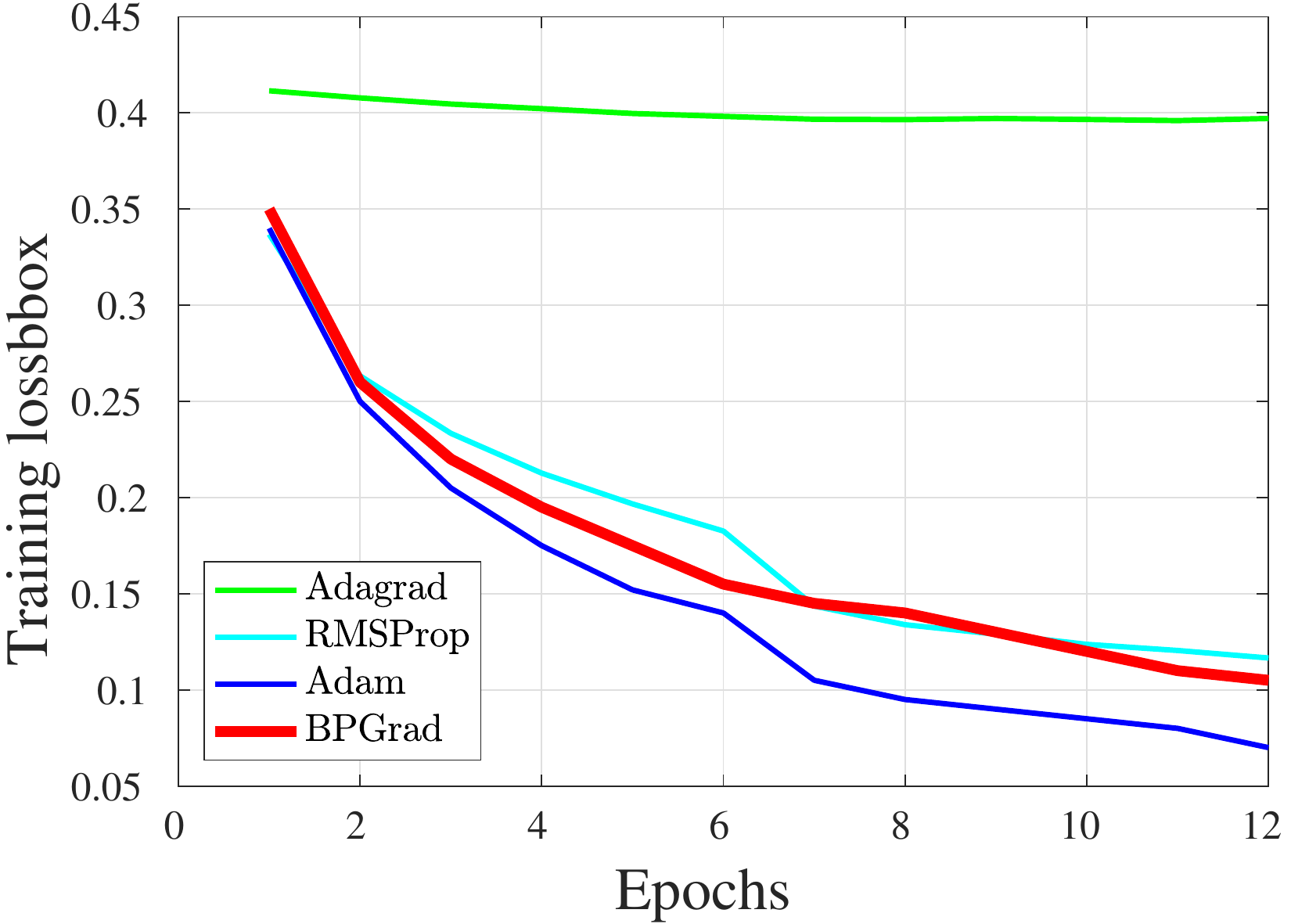}}
		\end{center}
	\end{minipage}	
	\begin{minipage}[b]{0.495\columnwidth}
		\begin{center}
			\centerline{\includegraphics[width=\columnwidth,height=0.75\columnwidth]{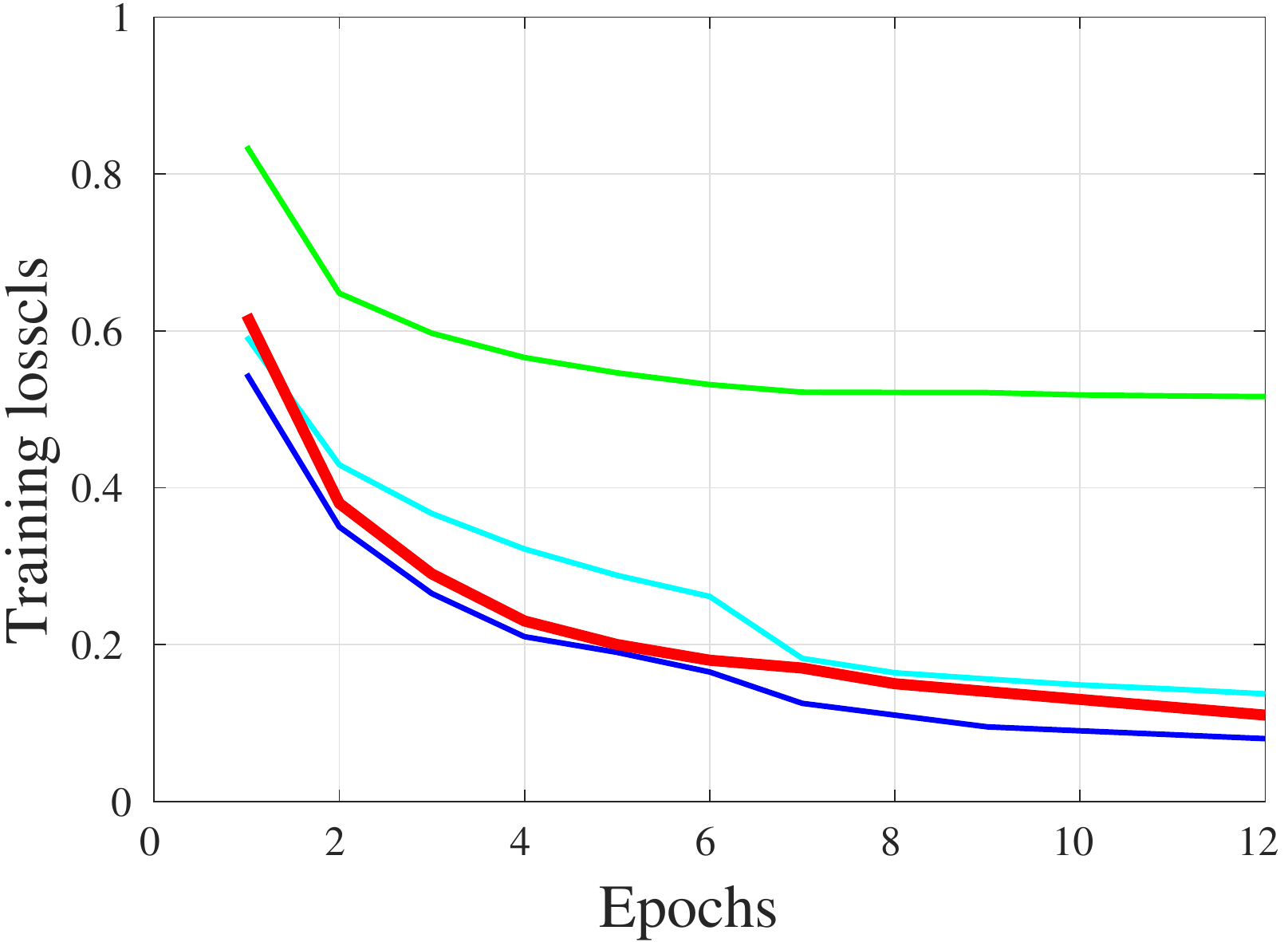}}			
		\end{center}
	\end{minipage}
	\vspace{-7mm}		
	\caption{\footnotesize Loss comparison on VOC2007 trainval dataset, including {\bf (left)} the regression loss using bounding boxes and {\bf (right)} the classification loss.}\label{fig:fast_rcnn_loss}
    \vspace{-3mm}
\end{figure}

%We conducted experiments to undeerstand how BPGrad compares to Adagrad, RMSProp and Adam, as well as to understand their generalization abilities on the task of object detection. 

Following Fast RCNN~\cite{girshick2015fast} in the demo code, we conduct the solver comparison on the PASCAL VOC2007 dataset~\cite{pascal-voc-2007} with 20 object classes using selective search \cite{uijlings2013selective} as default object proposal approach. For all solvers, we train the network for $12$ epochs using the $5K$ images in VOC2007 trainval set and test it using $4.9K$ images in VOC2007 test set. We set the weight decay and momentum to $0.0005$ and $0.9$, respectively, and use default learning rates for the competitors. We do not compare with Adadelta because we cannot obtain reasonable performance after heavy parameter tuning. For our solver we set $L=100$ and $N=12$.

We show the training comparison in Fig.~\ref{fig:fast_rcnn_loss}, and test results in Table \ref{table:detection_fast_rcnn}. Though our training losses are inferior to those of Adam in this case, our solver works as well as Adam at test time on average, achieving best AP on $11$ out of $20$ classes. This demonstrates the suitability of our solver in training deep models for object detection.

%%%%%%%%%%%%%%%%%%%%%%%%%%%%%%%%%%%%%%%%%%%%%%
\subsection{Object Segmentation}\label{sub:segmentation}

%We conducted experiments to undeerstand how BPGrad compares to Adagrad, RMSProp and Adam, as well as to understand their generalization abilities on the task of segmentation. 
Following the work~\cite{long2015fully} for semantic segmentation based on fully convolutional networks (FCN), we train FCN-32s with per-pixel multinomial logistic loss and validate it with the standard metric of mean pixel intersection over union (IU), pixel accuracy, and mean accuracy. For all the solvers, we conduct training for $50$ epochs with momentum $0$ and weight decay $0.0005$ on PASCAL VOC2011 \cite{pascal-voc-2011} segmentation set. For Adagrad, RMSProp and Adam, we find that the default parameters are able to achieve the best performance. For Adadelta, we tune its parameters with $\epsilon=10^{-9}$. The global learning rate for RMSProp is set to $10^{-5}$ and $10^{-4}$ for both Adagrad and Adam. Adadelta does not require the global learning rate. For our solver, we set $L=500$.  

We show the learning curves on training and validation datasets in  Fig.~\ref{fig:fcn_segmentation}, and list the test-time comparison results in Table \ref{table:fcn_segmentation}. In this case our solver has very similar learning behavior as Adagrad, but achieves the best performance at test time. The smaller fluctuation over epochs on the validation dataset demonstrates again the superior reliability of our solver, compared with the competitors. Taking these observations into account, we believe that our solver has the ability of learning robust deep models for object segmentation.

\begin{figure}[t]
	\begin{minipage}[b]{0.495\columnwidth}
		\begin{center}
			\centerline{\includegraphics[width=\columnwidth,height=0.75\columnwidth]{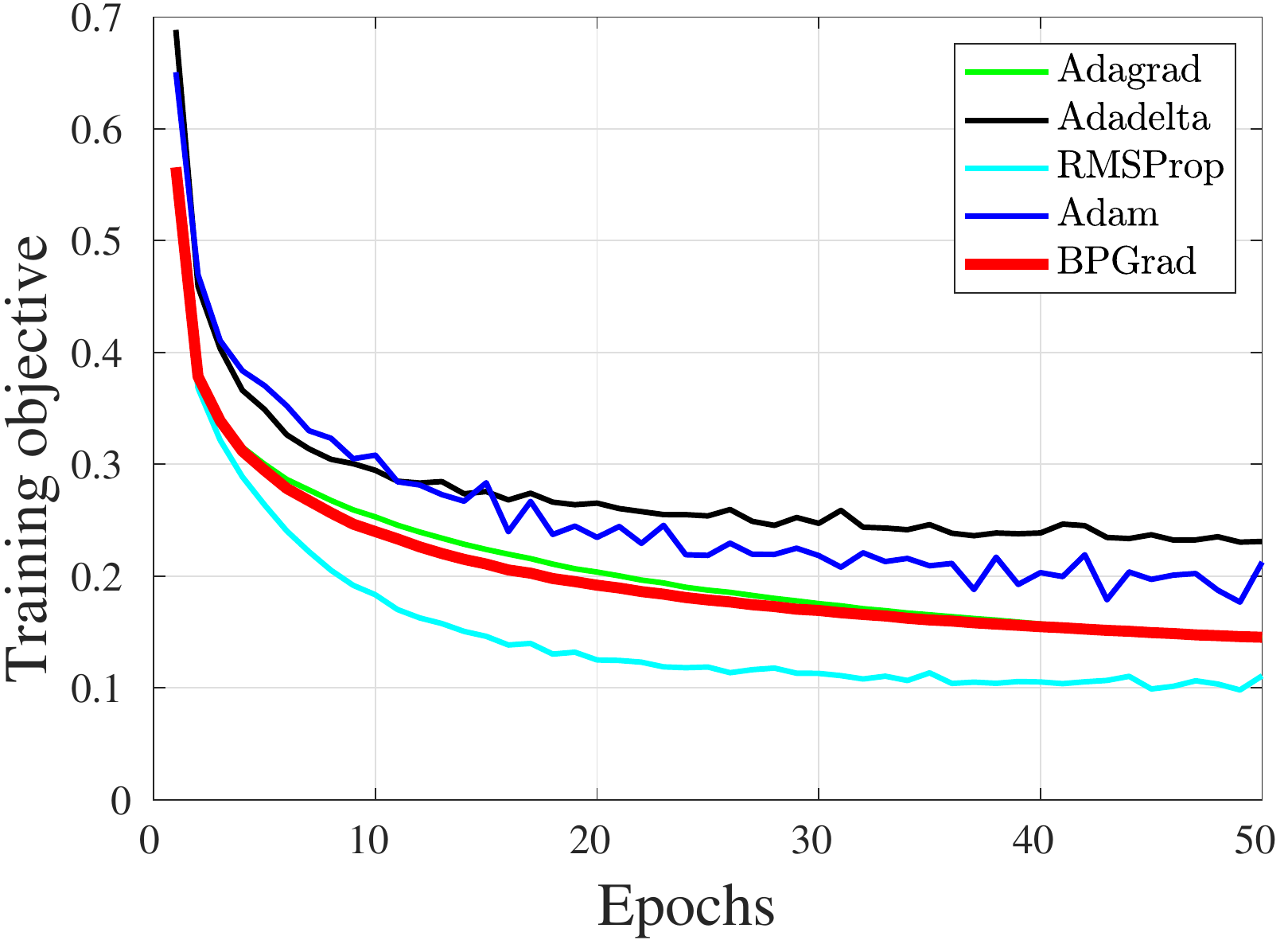}}
		\end{center}
	\end{minipage}	
% 	\begin{minipage}[b]{0.495\columnwidth}
% 		\begin{center}
% 			\centerline{\includegraphics[width=\columnwidth]{fcn_segmentation_train_accuracy.eps}}			
% 		\end{center}
% 	\end{minipage}	\hspace{2mm}	
% 	\begin{minipage}[b]{0.495\columnwidth}
% 		\begin{center}
% 			\centerline{\includegraphics[width=\columnwidth]{fcn_segmentation_val_objective.eps}}
% 		\end{center}
% 	\end{minipage}\hspace{2mm}
	\begin{minipage}[b]{0.495\columnwidth}
		\begin{center}
			\centerline{\includegraphics[width=\columnwidth,height=0.75\columnwidth]{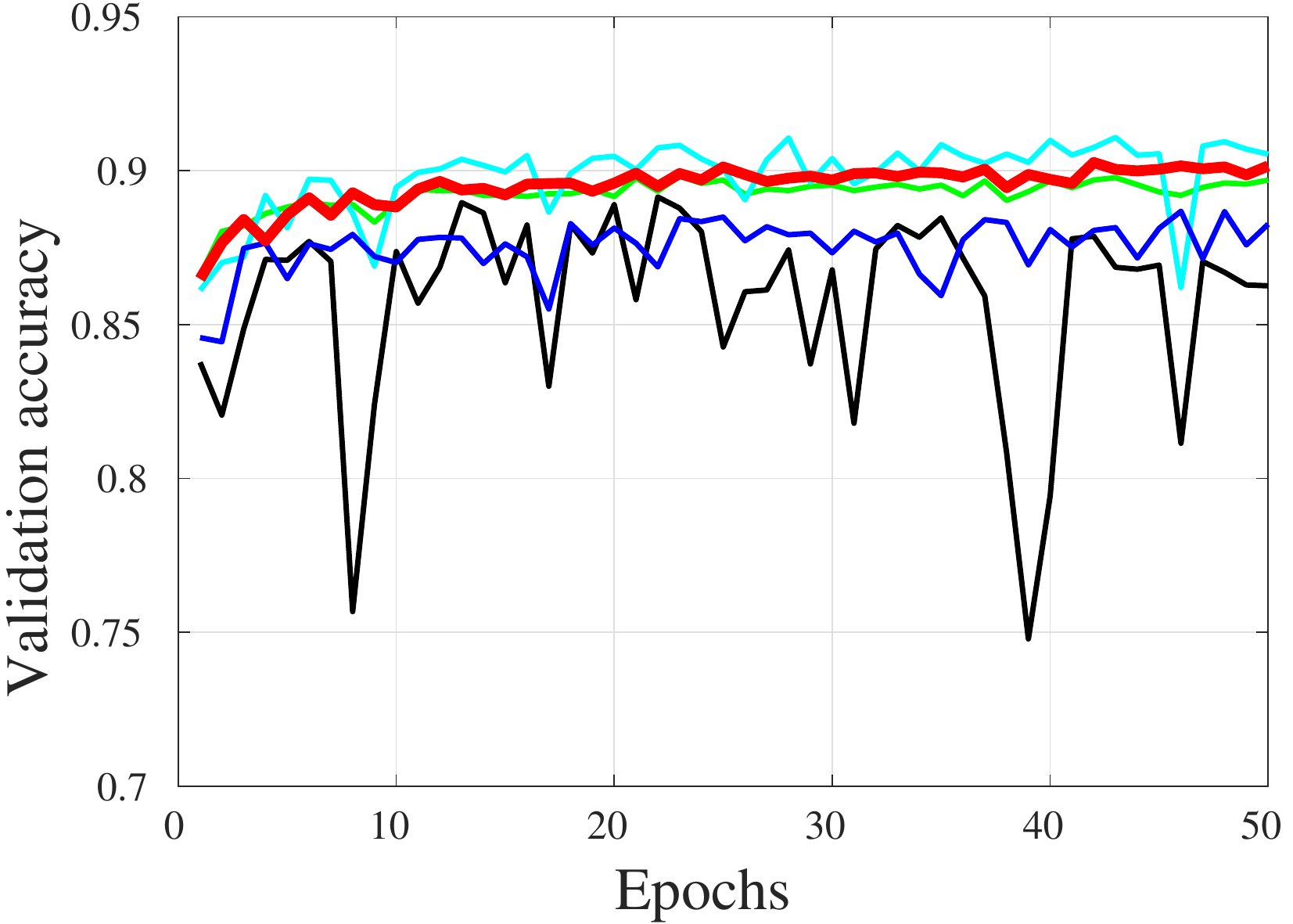}}			
		\end{center}
	\end{minipage}	
    \vspace{-7mm}
	\caption{\footnotesize Segmentation performance comparison using FCN-32s model on VOC2011 training and validation datasets.}\label{fig:fcn_segmentation}
    %\vspace{-3mm}
\end{figure}
 
\setlength{\tabcolsep}{1.5pt}
\begin{table}[t]\small
	\begin{center}	
	\begin{tabular}{|c|c|c|c|c|}
		\hline
& mean IU    & pixel accuracy   & mean accuracy & average \\ \hline
% SGD 	   & \textbf{63.33}	    & \textbf{90.38}     & \underline{78.66}  	  \\ \hline
 Adagrad   & 60.8		& 89.5    			& 77.4   & 75.9   \\ \hline
 Adadelta  & 46.6     				& 86.0    			& 54.4   & 62.3   \\ \hline
 RMSProp   & 60.5     				& \textbf{90.2}    & 71.0   & 73.9    \\ \hline
 Adam 	   & 50.9					& 87.2   			& 66.4    & 68.2   \\ \hline
 BPGrad    & \textbf{62.4}     	& 89.8 & \textbf{79.6}  & {\bf 77.3}	  \\  \hline
 	\end{tabular}
    \end{center}	
    \caption{\footnotesize Numerical comparison on semantic segmentation performance (\%) using VOC2011 test dataset at the $50$-th epoch.}
	\label{table:fcn_segmentation}
    \vspace{-3mm}
\end{table}

%%%%%%%%%%%%%%%%%%%%%%%%%%%%%%%%%%%%%%%%%%%%%%%%%
\section{Conclusion}
In this paper we propose a novel approximation algorithm, namely BPGrad, towards searching for global optimality in DL via branch and pruning based on Lipschitz continuity assumption. Our basic idea is to keep generating new samples from the parameter space (\ie branch) outside the removable parameter space (\ie pruning). Lipschitz continuity not only provides us a way to estimate the lower and upper bounds of global optimality, but also serves as regularization to further smooth the objective functions in DL. Theoretically we prove that under some conditions our BPGrad algorithm can converge to global optimality within finite iterations. Empirically in order to avoid the high demand of computation as well as storage for BPGrad in DL, we propose a new efficient solver. Theoretical and empirical justification on preserving the properties of BPGrad is provided. We demonstrate the superiority of our solver to several conventional DL solvers in object recognition, detection, and segmentation.

\newpage
%\balance
{\footnotesize
	\bibliographystyle{ieee}
	\bibliography{egbib}

\begin{thebibliography}{10}\itemsep=-1pt

\bibitem{blum1989training}
A.~Blum and R.~L. Rivest.
\newblock Training a 3-node neural network is np-complete.
\newblock In {\em NIPS}, pages 494--501, 1989.

\bibitem{bottou2016optimization}
L.~Bottou, F.~E. Curtis, and J.~Nocedal.
\newblock Optimization methods for large-scale machine learning.
\newblock {\em arXiv preprint arXiv:1606.04838}, 2016.

\bibitem{brutzkus2017globally}
A.~Brutzkus and A.~Globerson.
\newblock Globally optimal gradient descent for a convnet with gaussian inputs.
\newblock {\em arXiv preprint arXiv:1702.07966}, 2017.

\bibitem{chaudhari2016entropy}
P.~Chaudhari, A.~Choromanska, S.~Soatto, and Y.~LeCun.
\newblock Entropy-sgd: Biasing gradient descent into wide valleys.
\newblock {\em arXiv preprint arXiv:1611.01838}, 2016.

\bibitem{choromanska2015loss}
A.~Choromanska, M.~Henaff, M.~Mathieu, G.~B. Arous, and Y.~LeCun.
\newblock The loss surfaces of multilayer networks.
\newblock In {\em AISTATS}, pages 192--204, 2015.

\bibitem{imagenet_cvpr09}
J.~Deng, W.~Dong, R.~Socher, L.-J. Li, K.~Li, and L.~Fei-Fei.
\newblock {ImageNet: A Large-Scale Hierarchical Image Database}.
\newblock In {\em CVPR}, 2009.

\bibitem{duchi2011adaptive}
J.~Duchi, E.~Hazan, and Y.~Singer.
\newblock Adaptive subgradient methods for online learning and stochastic
  optimization.
\newblock {\em JMLR}, 12(Jul):2121--2159, 2011.

\bibitem{erikssonapplied}
K.~Eriksson, D.~Estep, and C.~Johnson.
\newblock {\em Applied Mathematics Body and Soul: Vol I-III}.
\newblock Springer-Verlag Publishing, 2003.

\bibitem{pascal-voc-2007}
M.~Everingham, L.~Van~Gool, C.~K.~I. Williams, J.~Winn, and A.~Zisserman.
\newblock The {PASCAL} {V}isual {O}bject {C}lasses {C}hallenge 2007 {(VOC2007)}
  {R}esults.
\newblock
  http://www.pascal-network.org/challenges/VOC/voc2007/workshop/index.html.

\bibitem{pascal-voc-2011}
M.~Everingham, L.~Van~Gool, C.~K.~I. Williams, J.~Winn, and A.~Zisserman.
\newblock The {PASCAL} {V}isual {O}bject {C}lasses {C}hallenge 2011 {(VOC2011)}
  {R}esults.
\newblock
  http://www.pascal-network.org/challenges/VOC/voc2011/workshop/index.html.

\bibitem{girshick2015fast}
R.~Girshick.
\newblock Fast r-cnn.
\newblock In {\em CVPR}, pages 1440--1448, 2015.

\bibitem{goyal2017accurate}
P.~Goyal, P.~Doll{\'a}r, R.~Girshick, P.~Noordhuis, L.~Wesolowski, A.~Kyrola,
  A.~Tulloch, Y.~Jia, and K.~He.
\newblock Accurate, large minibatch sgd: Training imagenet in 1 hour.
\newblock {\em arXiv preprint arXiv:1706.02677}, 2017.

\bibitem{haeffele2017global}
B.~D. Haeffele and R.~Vidal.
\newblock Global optimality in neural network training.
\newblock In {\em CVPR}, pages 7331--7339, 2017.

\bibitem{hand2017global}
P.~Hand and V.~Voroninski.
\newblock Global guarantees for enforcing deep generative priors by empirical
  risk.
\newblock {\em arXiv preprint arXiv:1705.07576}, 2017.

\bibitem{he2016deep}
K.~He, X.~Zhang, S.~Ren, and J.~Sun.
\newblock Deep residual learning for image recognition.
\newblock In {\em CVPR}, pages 770--778, 2016.

\bibitem{hinton2012deep}
G.~Hinton, L.~Deng, D.~Yu, G.~E. Dahl, A.-r. Mohamed, N.~Jaitly, A.~Senior,
  V.~Vanhoucke, P.~Nguyen, T.~N. Sainath, et~al.
\newblock Deep neural networks for acoustic modeling in speech recognition: The
  shared views of four research groups.
\newblock {\em IEEE Signal Processing Magazine}, 29(6):82--97, 2012.

\bibitem{kawaguchi2016deep}
K.~Kawaguchi.
\newblock Deep learning without poor local minima.
\newblock In {\em NIPS}, pages 586--594, 2016.

\bibitem{kingma2014adam}
D.~Kingma and J.~Ba.
\newblock Adam: A method for stochastic optimization.
\newblock {\em arXiv preprint arXiv:1412.6980}, 2014.

\bibitem{koushik2016improving}
J.~Koushik and H.~Hayashi.
\newblock Improving stochastic gradient descent with feedback.
\newblock {\em arXiv preprint arXiv:1611.01505}, 2016.

\bibitem{krizhevsky2012imagenet}
A.~Krizhevsky, I.~Sutskever, and G.~E. Hinton.
\newblock Imagenet classification with deep convolutional neural networks.
\newblock In {\em NIPS}, pages 1097--1105, 2012.

\bibitem{lecun1998mnist}
Y.~LeCun.
\newblock The mnist database of handwritten digits.
\newblock \url{http://yann.lecun.com/exdb/mnist/}, 1998.

\bibitem{lecun1998gradient}
Y.~LeCun, L.~Bottou, Y.~Bengio, and P.~Haffner.
\newblock Gradient-based learning applied to document recognition.
\newblock {\em Proceedings of the IEEE}, 86(11):2278--2324, 1998.

\bibitem{pmlr-v49-lee16}
J.~D. Lee, M.~Simchowitz, M.~I. Jordan, and B.~Recht.
\newblock Gradient descent only converges to minimizers.
\newblock In {\em COLT}, pages 1246--1257, 2016.

\bibitem{long2015fully}
J.~Long, E.~Shelhamer, and T.~Darrell.
\newblock Fully convolutional networks for semantic segmentation.
\newblock In {\em CVPR}, pages 3431--3440, 2015.

\bibitem{MalherbeICML17}
C.~Malherbe and N.~Vayatis.
\newblock Global optimization of lipschitz functions.
\newblock In {\em ICML}, 2017.

\bibitem{mukkamala2017variants}
M.~C. Mukkamala and M.~Hein.
\newblock Variants of rmsprop and adagrad with logarithmic regret bounds.
\newblock {\em arXiv preprint arXiv:1706.05507}, 2017.

\bibitem{nguyen2017loss}
Q.~Nguyen and M.~Hein.
\newblock The loss surface of deep and wide neural networks.
\newblock {\em arXiv preprint arXiv:1704.08045}, 2017.

\bibitem{panageas2016gradient}
I.~Panageas and G.~Piliouras.
\newblock Gradient descent only converges to minimizers: Non-isolated critical
  points and invariant regions.
\newblock {\em arXiv preprint arXiv:1605.00405}, 2016.

\bibitem{sotiropoulos2001branch}
D.~G. Sotiropoulos and T.~N. Grapsa.
\newblock A branch-and-prune method for global optimization.
\newblock In {\em Scientific Computing, Validated Numerics, Interval Methods},
  pages 215--226. Springer, 2001.

\bibitem{soudry2016no}
D.~Soudry and Y.~Carmon.
\newblock No bad local minima: Data independent training error guarantees for
  multilayer neural networks.
\newblock {\em arXiv preprint arXiv:1605.08361}, 2016.

\bibitem{sutskever2013importance}
I.~Sutskever, J.~Martens, G.~Dahl, and G.~Hinton.
\newblock On the importance of initialization and momentum in deep learning.
\newblock In {\em ICML}, pages 1139--1147, 2013.

\bibitem{sutskever2014sequence}
I.~Sutskever, O.~Vinyals, and Q.~V. Le.
\newblock Sequence to sequence learning with neural networks.
\newblock In {\em NIPS}, pages 3104--3112, 2014.

\bibitem{Tieleman2012}
T.~Tieleman and G.~Hinton.
\newblock {Lecture 6.5---RmsProp: Divide the gradient by a running average of
  its recent magnitude}.
\newblock COURSERA: Neural Networks for Machine Learning, 2012.

\bibitem{uijlings2013selective}
J.~R. Uijlings, K.~E. Van De~Sande, T.~Gevers, and A.~W. Smeulders.
\newblock Selective search for object recognition.
\newblock {\em IJCV}, 104(2):154--171, 2013.

\bibitem{vedaldi2015matconvnet}
A.~Vedaldi and K.~Lenc.
\newblock Matconvnet: Convolutional neural networks for matlab.
\newblock In {\em ACM Multimedia}, pages 689--692, 2015.

\bibitem{yun2017global}
C.~Yun, S.~Sra, and A.~Jadbabaie.
\newblock Global optimality conditions for deep neural networks.
\newblock {\em arXiv preprint arXiv:1707.02444}, 2017.

\bibitem{zeiler2012adadelta}
M.~D. Zeiler.
\newblock Adadelta: an adaptive learning rate method.
\newblock {\em arXiv preprint arXiv:1212.5701}, 2012.

\bibitem{zhang2016understanding}
C.~Zhang, S.~Bengio, M.~Hardt, B.~Recht, and O.~Vinyals.
\newblock Understanding deep learning requires rethinking generalization.
\newblock {\em arXiv preprint arXiv:1611.03530}, 2016.

\bibitem{zhang2015deep}
S.~Zhang, A.~E. Choromanska, and Y.~LeCun.
\newblock Deep learning with elastic averaging sgd.
\newblock In {\em NIPS}, pages 685--693, 2015.

\bibitem{zhang2017Convergent}
Z.~Zhang and M.~Brand.
\newblock Convergent block coordinate descent for training tikhonov regularized
  deep neural networks.
\newblock In {\em NIPS}, 2017.

\end{thebibliography}
}

\end{document}